\newif\ifISIT
\pgfplotsset{compat=1.12}
\newtheorem{definition}{Definition}
\newtheorem{corollary}{Corollary}
\newtheorem{theorem}{Theorem}
\newtheorem{lemma}{Lemma}
\DeclareMathOperator{\Span}{span}
\DeclareMathOperator{\aut}{Aut}
\DeclareMathOperator{\stab}{Stab}
\DeclareSymbolFont{bbold}{U}{bbold}{m}{n}
\DeclareSymbolFontAlphabet{\mathbbold}{bbold}
\newcommand{\1}{\mathbbold{1}}
\DeclarePairedDelimiter\floor{\lfloor}{\rfloor}
\DeclarePairedDelimiterX{\inp}[2]{\langle}{\rangle}{#1, #2}
\newcommand{\E}{\mathop{\bE}}
\DeclareMathOperator{\sign}{sign}
\DeclareMathOperator{\vol}{Vol}
\DeclareMathOperator{\spn}{span}
\newcommand{\bB}{\mathbb{B}}
\newcommand{\bE}{\mathbb{E}}
\newcommand{\bF}{\mathbb{F}}
\newcommand{\bN}{\mathbb{N}}
\newcommand{\bR}{\mathbb{R}}
\newcommand{\cB}{\mathcal{B}}
\newcommand{\cC}{\mathcal{C}}
\newcommand{\cD}{\mathcal{D}}
\newcommand{\cH}{\mathcal{H}}
\newcommand{\cN}{\mathcal{N}}
\newcommand{\cO}{\mathcal{O}}
\newcommand{\cV}{\mathcal{V}}
\newcommand{\cX}{\mathcal{X}}
\newcommand{\boldc}{\textbf{c}}
\newcommand{\bolde}{\textbf{e}}
\newcommand{\boldl}{\textbf{l}}
\newcommand{\boldu}{\textbf{u}}
\newcommand{\boldv}{\textbf{v}}
\newcommand{\boldw}{\textbf{w}}
\newcommand{\boldx}{\textbf{x}}
\newcommand{\boldy}{\textbf{y}}
\newcommand{\boldz}{\textbf{z}}
\newcommand{\boldA}{\textbf{A}}
\newcommand{\boldB}{\textbf{B}}
\newcommand{\boldG}{\textbf{G}}
\newcommand{\norm}[1]{\lVert #1 \rVert_2}
\author{\textbf{Netanel Raviv}$^\star$, \IEEEauthorblockN{\textbf{Siddharth Jain}$^\dagger$, and \textbf{Jehoshua Bruck}$^\dagger$}
	\IEEEauthorblockA{
		$^\star$Department of Computer Science and Engineering, Washington University in St. Louis, St. Louis 63130, MO, USA\\
		$^\dagger$Department of Electrical Engineering, California Institute of Technology, Pasadena 91125, CA, USA}}
\begin{document}

\title{What is the Value of Data? on Mathematical\\Methods for Data Quality Estimation}

\maketitle

\IEEEpeerreviewmaketitle

\begin{abstract}
	Data is one of the most important assets of the information age, and its societal impact is undisputed. Yet, rigorous methods of assessing the quality of data are lacking. In this paper, we propose a formal definition for the quality of a given dataset. We assess a dataset's quality by a quantity we call the \emph{expected diameter}, which measures the expected disagreement between two randomly chosen hypotheses that explain it, and has recently found applications in active learning. We focus on Boolean hyperplanes, and utilize a collection of Fourier analytic, algebraic, and probabilistic methods to come up with theoretical guarantees and practical solutions for the computation of the expected diameter. We also study the behaviour of the expected diameter on algebraically structured datasets, conduct experiments that validate this notion of quality, and demonstrate the feasibility of our techniques.
\end{abstract}

\renewcommand{\thefootnote}{\arabic{footnote}}

\section{Introduction}\label{section:introduction}
Recent advances in machine learning (ML) have revolutionized our society in more ways than one. Yet, ML techniques are highly prone to \textit{garbage-in-garbage-out} issues, where processing uninformative, repetitive, or noisy data leads to nonsensical conclusions. However, even in the noiseless setting, by merely observing a large dataset it is hard to evaluate how informative it is, and what would be the accuracy of an arbitrary model that explains it over unseen data points. 

Since the ML paradigm is inherently heuristic, it is essential to develop methods to rigorously determine the value of datasets; such methods can be used to explain the success or failure of one learning method with respect to another, and to determine the intrinsic value of a given dataset. In particular, it is natural to aspire to a \textit{universal} notion of value, one that is devoid of the contextual use of the data, and does not pertain to any particular learning algorithm.

A few approaches exist in the literature, that aim towards assessment of a \textit{specific} learning method with respect to the dataset it operates on. For example, many learning algorithms are analyzed with respect to the \textit{size} of a randomly chosen dataset on which they operate~\cite{UnderstandingMachineLearning}, a measure called \textit{sample complexity}, that prioritizes quantity over quality. However, real-world datasets are rarely purely random and are often laboriously collected (e.g., in medical research). Moreover, quantity does not necessarily correlate with quality, as one can easily come up with two datasets of equal size, whose respective sets of consistent hypotheses (i.e., that explain the data well) are substantially different in terms of their variance\footnote{More generally, the \textit{No-Free-Lunch} theorem~\cite[Thm.~5.1]{UnderstandingMachineLearning} roughly states that for every learning method there exists a dataset on which it fails.}. Hence, the size of a given dataset does not always reflect its value.

An additional commonly used notion of data quality is its \textit{margin}, i.e., the minimum Euclidean distance between the convex hulls of the positive and the negative points, (e.g., for the well-known SVM method~\cite[Sec.~15]{UnderstandingMachineLearning}). However, one can similarly construct two datasets with identical margins, and substantially different sets of consistent hypotheses, and even such that the SVM method produces the same output (see Figure~\ref{figure:examples}).

\begin{figure}[bt]
	\centering
	\scalebox{0.6}
	{
		\begin{tikzpicture}[scale=2]
		\tikzstyle{vertex}=[circle,minimum size=15pt,inner sep=0pt]
		\tikzstyle{arrow}=[circle,minimum size=15pt,inner sep=0pt]
		
		\tikzstyle{positive vertex} = [vertex, fill=green!50]
		\tikzstyle{negative vertex} = [vertex, fill=red!50]
		
		\tikzstyle{unknown vertex} = [vertex, fill=yellow!50]{?}
		\tikzstyle{selected edge} = [draw,line width=5pt,-,red!50]
		\tikzstyle{edge} = [draw,thick,-,black]
		\node[negative vertex] (v0) at (0,0) {$\boldsymbol{-}$};;
		\node[positive vertex] (v1) at (0,1) {$\boldsymbol{+}$};
		\node[positive vertex] (v2) at (1,0) {$\boldsymbol{+}$};
		\node[unknown vertex] (v3) at (1,1) {$?$};
		\node[positive vertex] (v4) at (0.23, 0.4) {$\boldsymbol{+}$};
		\node[unknown vertex] (v5) at (0.23,1.4) {$?$};
		\node[unknown vertex] (v6) at (1.23,0.4) {$?$};
		\node[unknown vertex] (v7) at (1.23,1.4) {$?$};
		\node[] (captiona) at (0.5,-0.25) {$(a1)$};
		\draw[edge] (v0) -- (v1) -- (v3) -- (v2) -- (v0);
		\draw[edge] (v0) -- (v4) -- (v5) -- (v1) -- (v0);
		\draw[edge] (v2) -- (v6) -- (v7) -- (v3) -- (v2);
		\draw[edge] (v4) -- (v6) -- (v7) -- (v5) -- (v4);
		\draw[edge] (v0) -- (v2);
		\draw[edge] (v2) -- (v6);
		\draw[edge] (v6) -- (v4);
		\draw[edge] (v4) -- (v5);
		\draw[edge] (v7) -- (v3);
		\draw[edge] (v3) -- (v1);
		\node[negative vertex] (u0) at (1.5,0) {$\boldsymbol{-}$};
		\node[positive vertex] (u1) at (1.5,1) {$\boldsymbol{+}$};
		\node[negative vertex] (u2) at (2.5,0) {$\boldsymbol{-}$};
		\node[unknown vertex] (u3) at (2.5,1) {$?$};
		\node[negative vertex] (u4) at (1.73, 0.4) {$\boldsymbol{-}$};
		\node[unknown vertex] (u5) at (1.73,1.4) {$?$};
		\node[unknown vertex] (u6) at (2.73,0.4) {$?$};
		\node[unknown vertex] (u7) at (2.73,1.4) {$?$};
		\node[] (captionb) at (2.0,-0.25) {$(a2)$};
		\draw[edge] (u0) -- (u1) -- (u3) -- (u2) -- (u0);
		\draw[edge] (u0) -- (u4) -- (u5) -- (u1) -- (u0);
		\draw[edge] (u2) -- (u6) -- (u7) -- (u3) -- (u2);
		\draw[edge] (u4) -- (u6) -- (u7) -- (u5) -- (u4);
		\draw[edge] (u0) -- (u2);
		\draw[edge] (u2) -- (u6);
		\draw[edge] (u6) -- (u4);
		\draw[edge] (u4) -- (u5);
		\draw[edge] (u7) -- (u3);
		\draw[edge] (u3) -- (u1);
		
		\node[](e0) at (3.915,-0.15) {};
		\node[](e1) at (3.915,1.55) {};
		\node[](e2) at (4.68,0.7) {};
		\node[](e3) at (3.15,0.7) {};
		\node[] (center3) at ($0.5*(e1)+0.5*(e0)$) {};
		\draw[-{Latex[width=2mm]}] (e0) edge (e1);
		\draw[-{Latex[width=2mm]}] (e3) edge (e2);
		\node[] (captionb) at (3.915,-0.25) {$(b1)$};
		\node[positive vertex] (d1) at ($0.33*(e1)+0.67*(e3)$) {$\boldsymbol{+}$};
		\node[positive vertex] (d2) at ($0.67*(e1)+0.33*(e3)$) {$\boldsymbol{+}$};
		\node[positive vertex] at ($0.25*(d1)+0.25*(d2)+0.5*(center3)$) (d5) {$\boldsymbol{+}$};
		\node[negative vertex] (d3) at ($0.33*(e0)+0.67*(e2)$) {$\boldsymbol{-}$};
		\node[negative vertex] (d4) at ($0.67*(e0)+0.33*(e2)$) {$\boldsymbol{-}$};	
		\node[negative vertex] at ($0.25*(d3)+0.25*(d4)+0.5*(center3)$) (d6) {$\boldsymbol{-}$};
		\draw[edge,dashed] ($0.5*(e1)+0.5*(e2)+(0.2,0.2)$) -- ($0.5*(e3)+0.5*(e0)-(0.2,0.2)$);
		
		\node[](f0) at (5.565,-0.15) {};
		\node[](f1) at (5.565,1.55) {};
		\node[](f2) at (6.33,0.7) {};
		\node[](f3) at (4.8,0.7) {};
		\node[] (center4) at ($0.5*(f1)+0.5*(f0)$) {};
		\draw[-{Latex[width=2mm]}] (f0) edge (f1);
		\draw[-{Latex[width=2mm]}] (f3) edge (f2);
		\node[] (captionb) at (5.415,-0.25) {$(b2)$};
		\node[positive vertex] at ($0.25*(f1)+0.25*(f3)+0.5*(center4)$) (g1) {$\boldsymbol{+}$};
		\node[positive vertex] at ($(g1)+(0.25,0.25)$) (g2) {$\boldsymbol{+}$};
		\node[positive vertex] at ($(g1)-(0.25,0.25)$) (g3) {$\boldsymbol{+}$};
		\node[negative vertex] at ($0.25*(f2)+0.25*(f0)+0.5*(center4)$) (g4) {$\boldsymbol{-}$};
		\node[negative vertex] at ($(g4)+(0.25,0.25)$) (g5) {$\boldsymbol{-}$};
		\node[negative vertex] at ($(g4)-(0.25,0.25)$) (g5) {$\boldsymbol{-}$};
		\draw[edge,dashed] ($0.5*(f1)+0.5*(f2)+(0.2,0.2)$) -- ($0.5*(f3)+0.5*(f0)-(0.2,0.2)$);
		\end{tikzpicture}
	}
	\caption{Datasets~$(a1)$ and~$(a2)$, that reside in~$\{\pm 1\}^3$, are both of size~$4$. However, every two affine hyperplanes that classify~$(a1)$ correctly (i.e., agree on all green and red points) agree on all the remaining unknown (yellow) points, whereas some affine hyperplanes that classify~$(a2)$ correctly do not. Hence, $(a1)$ is intuitively more valuable than~$(a2)$. An SVM algorithm on datasets~$(b1)$ and~$(b2)$ in~$\bR^2$ yields identical separators, given as dashed lines. However,~$(b2)$ is clearly more informative than~$(b1)$ due to smaller variability of the consistent hypotheses, even though their margins are identical.}\label{figure:examples}
\end{figure}
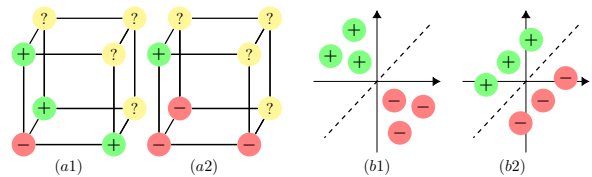

In this paper we propose a method for assessing the intrinsic quality of a given dataset~$\cD$. For the reasons discussed above, our aim is to provide methods that are \textit{performance-independent}, i.e., that do not rely on finding a consistent hypothesis and validating its performance over unseen data points. Instead, we provide a measure that explains the performance of \textit{any} hypothesis from a given hypotheses class, regardless of the learning algorithm that is used to obtain it.

Specifically, with respect to a set of hypotheses~$\cH$ that agree on a dataset~$\cD$, we define the quality of~$\cD$ as the expected disagreement between two random members of~$\cH$, a property that we call \textit{expected diameter}. Focusing on expected disagreement between randomly chosen hypotheses (rather than, say, on the maximum disagreement), encapsulates the following meaningful aspects of our goal. 

First, since all hypotheses in~$\cH$ explain the dataset equally well, we naturally associate a probability distribution on~$\cH$, often called a \textit{prior}, which reflects the user's belief regarding their likelihood. Second, as most classic and contemporary ML techniques employ randomness in one way or another, the output of a random ML algorithm can also be viewed as a probability distribution on~$\cH$. The expected diameter captures the tangent point of these two concepts; it measures the expected disagreement between a hypothesis chosen according to the prior on~$\cH$, and one that is chosen according to learning algorithm. To keep the expected diameter oblivious to any subjective prior and to any particular learning algorithm, we consider both distributions \textit{uniform} on~$\cH$. The precise nature of this uniformity, alongside a formal description of the above intuition, will be given shortly in Section~\ref{section:preliminaries}.


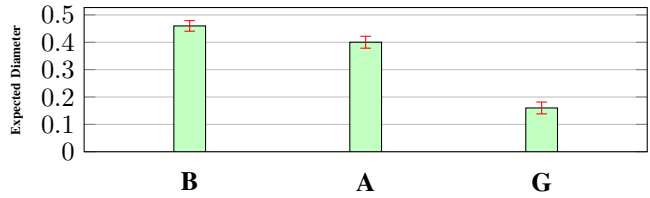
\begin{figure}[t]
	\centering
	\begin{tikzpicture}
	\begin{axis}[
	width  = 0.5*\textwidth,
	height = 3.5cm,
	major x tick style = transparent,
	ybar=2*\pgflinewidth,
	bar width=12pt,
	ymajorgrids = true,
	symbolic x coords={\textbf{B},\textbf{A},\textbf{G}},
	xtick = data,
	ytick = {0,0.1,0.2,0.3,0.4,0.5},
	scaled y ticks = false,
	enlarge x limits=0.3,
	ymin=0,
	legend cell align=left,
	legend style={at={(0.5,-0.18)},anchor=north},
	ylabel={\tiny{\textbf{Expected Diameter}}},
	]
	\addplot[style={fill=green!24},error bars/.cd, y dir=both, y explicit, error bar style=red]
	table [x=x,y=y,y error=error, col sep=comma] {
		x, y, error
		\textbf{B}, 0.459809039, 0.019274802
		\textbf{A}, 0.400253281, 0.021775986
		\textbf{G}, 0.160018563, 0.021578991
	};
	\end{axis}
	\end{tikzpicture}
	\caption{The mean and standard deviation of the expected diameter of randomly generated \textit{bad} (\textbf{B}), \textit{arbitrary} (\textbf{A}), and \textit{good} (\textbf{G}) datasets\ifISIT\else, see Section~\ref{section:experimental}\fi.}
	\label{figure:BRG}
\end{figure}

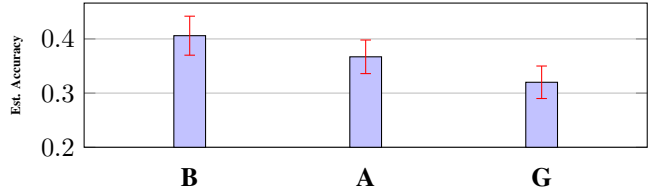
\begin{figure}[t]
	\centering
	\begin{tikzpicture}
	\begin{axis}[
	width  = 0.5*\textwidth,
	height = 3.5cm,
	major x tick style = transparent,
	ybar=2*\pgflinewidth,
	bar width=12pt,
	ymajorgrids = true,
	symbolic x coords={\textbf{B},\textbf{A},\textbf{G}},
	xtick = data,
	ytick = {0,0.1,0.2,0.3,0.4,0.5},
	scaled y ticks = false,
	enlarge x limits=0.3,
	ymin=0.2,
	legend cell align=left,
	legend style={at={(0.5,-0.18)},anchor=north},
	ylabel={\tiny{\textbf{Est. Accuracy}}},
	]
	\addplot[style={fill=blue!24},error bars/.cd, y dir=both, y explicit, error bar style=red]
	table [x=x,y=y,y error=error, col sep=comma] {
		x, y, error
		\textbf{B}, 0.406, 0.036
		\textbf{A}, 0.367, 0.031
		\textbf{G}, 0.32, 0.03
	};
	\end{axis}
	\end{tikzpicture}
	\caption{The mean and standard deviation of the estimated accuracy of perceptron with respect to a uniform prior, on the same datasets as in Figure~\ref{figure:BRG}.
	}
	\label{figure:BRG2}
\end{figure}

Before summarizing our contributions, we demonstrate experimentally that the expected diameter indeed predicts the success of learning.
Figure~\ref{figure:BRG} presents the mean and standard deviation of the expected diameter on 300 randomly generated datasets of \textit{identical} size and dimension, out of which 100 are \textit{bad} (\textbf{B}), i.e., contain redundant information, 100 are \textit{arbitrary} (\textbf{A}), i.e., chosen entirely at random, and 100 are \textit{good} (\textbf{G}), i.e., contain many informative pairs of data points. In Figure~\ref{figure:BRG2} we used \textit{the same} datasets as in Figure~\ref{figure:BRG} and estimated the distance between a hypothesis chosen according to a uniform prior (representing the ``true'' function), and a hypothesis produced by a randomized perceptron algorithm; it is evident from these experiments that lower expected diameter correlates with better accuracy. \ifthenelse{\boolean{ISIT}}{Full details and omitted proofs are given in the online version of this paper~\cite{arXiv}}{Formal description and technical details are given in~Section~\ref{section:experimental}}.

\paragraph*{Our Contribution} We focus on Boolean datasets and the hypotheses class of homogeneous linear separators; a class that is also known as \textit{halfspaces}, $\sign$ functions, or \textit{linear threshold functions}, and encapsulates many other classes by a set of known reductions~\cite[Table~I]{Cover}. We begin by presenting an intriguing connection to Fourier analysis of Boolean functions in the form of a polynomial algebraic algorithm for approximating the expected diameter (Section~\ref{section:Fourier}). This algorithm applies to any distribution on~$\cH$, but is most useful for ones that are in some sense ``short'', which includes the uniform ones. A surprising corollary of this part is that the expected diameter can be approximated efficiently \textit{without} the ability to randomly sample a hypothesis according to the underlying probability distribution on~$\cH$; an appealing feature since sampling is often hard or unknown.

Albeit being polynomial, the complexity of this algorithm is rather prohibitive, and hence in Section~\ref{section:volume} we focus on a particular important case of a samplable distribution on~$\cH$. For this distribution we present two different probabilistic algorithms, \ifthenelse{\boolean{ISIT}}{and provide their theoretical analysis}{and analyze their theoretical complexity and probabilistic guarantees}. \ifthenelse{\boolean{ISIT}}{We conclude the paper in Section~\ref{section:additional} with two additional topics. The first of which is}{We continue in Section~\ref{section:CosetLemma} with} a structural theorem, which shows that datasets with a certain algebraic structure possess a convenient uniformity of the expected diameter. This uniformity is formulated by using tools from Boolean algebra, group theory, and graph theory, and is independent of any particular way of computing the expected diameter.
\ifthenelse{\boolean{ISIT}}{The second is t}{T}he case of data over the real-number field, which is somewhat easier to handle\ifthenelse{\boolean{ISIT}}{}{, is discussed in Section~\ref{section:realData}. 
We conclude the paper in Section~\ref{section:experimental} by demonstrating some of our methods experimentally. Formal definitions and mathematical background are given shortly in Section~\ref{section:preliminaries}}.

\section{Preliminaries}\label{section:preliminaries}
For a given dataset $\cD=\{ (\boldx_i,y_i)\vert \boldx_i\in\{\pm1\}^n, y_i\in\{\pm 1\} ,i\in[k] \}$
let~$\cX\triangleq\{\boldx_i\}_{i=1}^k$, and
let~$\cH=\cH(\cD)$ be the set of all homogeneous halfspaces~$h:\{\pm 1\}^n\to\{\pm 1\}$, $h(\boldx)=\sign(\boldw\cdot\boldx)$ for some~$\boldw\in\bR^n$, such that~$h(\boldx_i)=y_i$ for every~$i\in[k]$. We call~$\cH$ the set of
\textit{consistent hypotheses}, and occasionally abuse the notation by using~$\cH$ to denote an unspecified probability distribution over the set of consistent hypotheses. For every pair of halfspaces~$h_1$ and~$h_2$ define their respective distance as $d(h_1,h_2)=\frac{1}{2^n}\sum_{\boldx\in\{\pm1\}^n}\frac{1-h_1(\boldx)h_2(\boldx)}{2}$, which amounts to the fraction of~$\boldx$'s on which~$h_1$ and~$h_2$ disagree. 

\ifISIT
The accuracy of (a given run of) any probabilistic learning algorithm~$A$ on~$\cD$ is naturally measured by~$d(f,g)$, where~$f$ is the ``true'' function by which~$\cD$ is labeled and~$A(\cD)=g$. Therefore, letting~$\cH_{prior}$ be the prior at hand, and~$\cH_A$ be the probability distribution on~$\cH$ that is induced by~$A$, the expected accuracy of~$A$ equals~$\E_{h_1\sim \cH_{prior},h_2\sim \cH_A}d(h_1,h_2)$. Since our aim is to obtain a \textit{universal} notion of data quality, we consider both~$\cH_{prior}$ and~$\cH_A$ as some general distribution~$\cH$, and measure the quality of~$\cD$ by using the following quantity\ifthenelse{\boolean{ISIT}}{\footnote{We note that independently of this work, a similar quantity appeared in~\cite{DiameterBased} for applications in active learning, but was not studied in depth.}}{}.
\else
We measure the quality of~$\cD$ according to its \textit{expected diameter}, defined as follows.
\fi
\begin{definition}
	For a given dataset~$\cD$ and a given probability distribution~$\cH$ over its set of consistent hypotheses, the \emph{expected diameter} of~$\cD$ is $\E_{h_1,h_2\sim \cH}d(h_1,h_2)$. The dependence on~$\cH$ is omitted if unspecified or clear from the context.
\end{definition}
The aim of this paper is to devise techniques for computing the expected diameter of a given dataset~$\cD$, which is a real number between~$0$ and~$\frac{1}{2}$\ifthenelse{\boolean{ISIT}}{}{ (see Appendix~\ref{SM:Omitted})}. We argue that the most suitable probability distribution for data quality estimation is the \textit{uniform distribution}~$\cH_{uni}$, defined as~$\Pr(h)=1/|\cH|$ for every~$h\in\cH$\ifthenelse{\boolean{ISIT}}{}{ (see Subsection~\ref{section:WhyED})}. Results for~$\cH_{uni}$ (and more broadly, any distribution~$\cH$ such that $c(\cH)\triangleq |\cH|\sum_{h\in \cH}\Pr(h)^2$
is small) are given in Section~\ref{section:Fourier} by using Fourier analysis. Due to prohibitive (albeit polynomial) complexity in Section~\ref{section:Fourier}, we study a surrogate distribution~$\cH_{vol}$, that we call the \textit{volume distribution}, in Section~\ref{section:volume}. To define~$\cH_{vol}$, notice that the discrete set~$\cH$ naturally admits a continuous one (often called \textit{the version space})
\begin{align*}
\cV \triangleq \{ \boldw\in \bR^n \vert y_i(\boldw\cdot \boldx_i)\ge 0~\forall i\in[k] \mbox{ and } \norm{\boldw}\le 1\},
\end{align*}
which is partitioned to~$|\cH|$ parts $
\cV_h = \{ \boldw\in \cV \vert \sign(\boldw\cdot \boldx)=h(\boldx) \mbox{ for all }\boldx\in \{\pm1\}^n \}$ for~$h\in\cH$. Hence, in~$\cH_{vol}$ we define $\Pr(h)=\vol(\cV_h)/\vol(\cV)$ for every~$h\in\cH$. The volume distribution is (approximately) samplable by using algorithms for sampling from convex bodies (see below). Namely, one can sample~$\boldw\in\cV$ (approximately) uniformly at random, and output~$\sign(\boldw\cdot\boldx)$. The authors are not aware of any efficient algorithm\ifthenelse{\boolean{ISIT}}{}{\footnote{Of course, one can get~$\cH_{uni}$ by rejection sampling, but the resulting complexity is super-exponential.}} to sample~$h\in\cH_{uni}$, but nevertheless, we are able to estimate the expected diameter under $\cH_{uni}$ without sampling. 

We focus on probabilistic algorithms, that for some~$\epsilon,\eta>0$, guarantee at most~$\epsilon$ additive deviation from the expected diameter with probability at least~$1-\eta$. In what follows we use the standard notation~$[n]\triangleq \{1,2,\ldots,n\}$, we use lowercase bold letters to denote vectors and regular lowercase letters to denote scalars or functions (e.g., $\boldx=(x_1,\ldots,x_n)$).
\ifISIT\else
\subsection{Why Expected Diameter?}\label{section:WhyED}
Clearly, a natural measure for the success of a learning algorithm is~$d(f,g)$, where~$f$ is the ``true'' function, and~$g$ is the output of the algorithm. However, in reality the existence of a ``true'' function is merely an assumption (known as the \textit{realizability assumption}~\cite[Def.~2.1]{UnderstandingMachineLearning}), and hence one normally seeks a ``most probable''~$f$, a notion which requires probabilistic assumptions on the data gathering process. For datasets that might contain significant bias, one can only assume that \textit{all}~$f$'s that classify the dataset correctly are equally likely. 

On the other hand, choosing a learning method, even for a given hypothesis class, is a formidable task for many data scientists. For example, one may choose different types of gradient descent, loss functions, and regularization parameters, or randomize the choice of hyperparameters, and end up with a different function~$g$. Further, algorithms which process the dataset sequentially, such as the well-known perceptron, are susceptible to the order by which the datapoints are processed. Since we aim for the most uniform notion of data quality, we coalesce all these aspects into one by viewing~$g$ as chosen uniformly at random.

Specifically, the accuracy of (a given run of) any probabilistic learning algorithm~$A$ on~$\cD$ is naturally measured by~$d(f,g)$, where~$f$ is the ``true'' function by which~$\cD$ is labeled and~$A(\cD)=g$. Therefore, letting~$\cH_{prior}$ be the prior at hand, and~$\cH_A$ be the probability distribution on~$\cH$ that is induced by~$A$, the expected accuracy of~$A$ equals~$\E_{h_1\sim \cH_{prior},h_2\sim \cH_A}d(h_1,h_2)$. Since our aim is to obtain a \textit{universal} notion of data quality, we consider both~$\cH_{prior}$ and~$\cH_A$ as some general distribution~$\cH$, and measure the quality of~$\cD$ by using the expected diameter according to that~$\cH$.

As explained above, for technical reasons we study two different interpretations of a ``uniform'' distribution over~$\cH$.
In~$\cH_{vol}$, the weight vector~$\boldw$ is chosen according to a \textit{continuous} uniform distribution on the version space. On the contrary, $\cH_{uni}$ is a \textit{discrete} uniform distribution on the (finite) set~$\cH$, i.e., where every hypothesis is chosen with probability~$1/|\cH|$. Specializing/generalizing this question to particular priors, particular learning algorithms, non-separable datasets, different hypotheses classes, or hypotheses that do not classify~$\cD$ perfectly, are left for future research.
\fi

\ifISIT\else
\subsection{Previous Work}
We first note that independently of this work, a similar quantity appeared in~\cite{DiameterBased} for applications in active learning, but was not studied in depth. Extremal questions of similar flavor appeared in~\cite[Sec.~8]{DMofNN}, which studies the notion of \textit{specifying sets}. For a given class~$H$ of Boolean functions and a function~$f\in H$, a specifying set for~$f$ in~$H$ is a dataset such that~$f$ is the unique function in~$H$ which classifies it correctly. It is readily verified that a dataset is a specifying set if and only if its expected diameter is zero.

Our notion for the value of data is not to be confused with similar terms in the data acquisition literature (e.g.,~\cite{DataPricing1,DataPricing2}). In this line of works, data is acquired from individuals that fix its price arbitrarily (normally as a function of their personal perception of privacy infringement), and no rigorous notion of data quality is discussed. Finally, \cite{Vapnik} presents a novel learning framework that captures inter-dependence between data points; this idea is substantially different from ours, but it can also be viewed as relating to data quality.
\fi
\subsection{Mathematical Background}
\paragraph*{Fourier Analysis of Boolean Functions \cite{AnalysisOfBoolean} (Section~\ref{section:Fourier})} Every Boolean function~$f:\{ \pm 1\}^n\to \bR$ can be represented as a linear combination over~$\bR$ of the functions~$\{ \chi_{S}(\boldx) \}_{S\subseteq [n]}$, where~$\chi_S(\boldx)=\prod_{j\in S}x_j$ for every~$S\subseteq [n]$. The coefficient of~$\chi_S(\boldx)$ in this linear combination is called the \textit{Fourier coefficient} of~$f$ at~$S$, and it is denoted by~$\Hat{f}(S)$. The collection of all Fourier coefficients of~$f$ is called the \textit{Fourier spectrum} of~$f$. Each Fourier coefficient~$\Hat{f}(S)$ equals the inner product between~$f$ and~$\chi_S$, defined as~$\inp{f}{\chi_S}\triangleq\E_{\boldx}f(\boldx)\chi_S(\boldx)$, where~$\boldx$ is chosen uniformly at random. For any two Boolean functions~$f$ and~$g$, their inner product can be computed by the inner product (in the usual sense) of their respective Fourier spectra, a result known as Plancherel's identity (or Parseval's identity if~$f=g$): $\inp{f}{g}=\sum_{s\subseteq [n]}\Hat{f}(S)\Hat{g}(S)$. Finally, an attractive feature of Fourier analytic methods on halfspaces is that their largest Fourier coefficients appear on lower degree terms, a property known as \textit{Fourier concentration}, and given in the following lemma. 
\begin{lemma}\cite{Peres}\label{lemma:Peres}
	For an integer~$a\ge 0$ and a function~$f:\{\pm1\}^n\to \bR$, let~$W^{\ge a}[f]\triangleq\sum_{|S|\ge a}\hat{f}(S)^2$. For every~$0<b<1$, every halfspace~$f$ satisfies that~$W^{\ge a}[f]\le b$, where~$a=O(1/b^2)$.
\end{lemma}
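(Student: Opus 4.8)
The plan is to derive this degree-$a$ Fourier tail bound as a soft consequence of the sharp noise-sensitivity estimate for halfspaces, which is the genuine content of Peres's result. Recall that the noise operator $T_\rho$ acts diagonally in the Fourier basis, $\widehat{T_\rho f}(S)=\rho^{|S|}\hat{f}(S)$, so that for a halfspace $f:\{\pm1\}^n\to\{\pm1\}$ (hence $\sum_S\hat{f}(S)^2=1$) the noise sensitivity at rate $\delta$ is a purely spectral quantity,
\begin{align*}
\mathbf{NS}_\delta[f]\;=\;\tfrac12\bigl(1-\inp{f}{T_{1-2\delta}f}\bigr)\;=\;\tfrac12\sum_{S\subseteq[n]}\bigl(1-(1-2\delta)^{|S|}\bigr)\hat{f}(S)^2.
\end{align*}
Every summand is nonnegative, so discarding the low-degree terms and using $(1-2\delta)^{|S|}\le(1-2\delta)^{a}$ whenever $|S|\ge a$ yields the one-sided inequality
\begin{align*}
\mathbf{NS}_\delta[f]\;\ge\;\tfrac12\bigl(1-(1-2\delta)^{a}\bigr)\sum_{|S|\ge a}\hat{f}(S)^2\;=\;\tfrac12\bigl(1-(1-2\delta)^{a}\bigr)\,W^{\ge a}[f].
\end{align*}
First I would pin down the noise rate by setting $\delta=1/(2a)$, so that $(1-2\delta)^{a}=(1-1/a)^{a}\le e^{-1}$ and therefore $1-(1-2\delta)^{a}\ge 1-e^{-1}$ is bounded away from zero by an absolute constant; rearranging gives $W^{\ge a}[f]\le \tfrac{2}{1-e^{-1}}\,\mathbf{NS}_\delta[f]$. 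At this point the whole problem reduces to controlling $\mathbf{NS}_\delta[f]$ for halfspaces.

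The hard part will be Peres's estimate $\mathbf{NS}_\delta[f]\le C\sqrt{\delta}$ for an absolute constant $C$, which is where all the geometry of linear threshold functions enters. I would prove it by the standard rerandomization coupling: realize the $(1-2\delta)$-correlated pair $(\boldx,\boldy)$ by drawing a set $J\subseteq[n]$ that contains each coordinate independently with probability $2\delta$, letting $\boldx$ and $\boldy$ agree off $J$ and be independent and uniform on $J$. Conditioning on $J$ and on the frozen coordinates outside $J$ collapses $f=\sign(\boldw\cdot\boldx-\theta)$ to a threshold in the free variables, and the conditional disagreement probability equals the chance that two independent copies of the weighted sum $\sum_{i\in J}w_i z_i$ straddle a single fixed point. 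This is exactly a one-dimensional anti-concentration question, controlled by a Berry--Esseen comparison to the Gaussian halfspace (whose noise sensitivity is the explicit $\arccos(1-2\delta)/\pi=O(\sqrt{\delta})$), with the degenerate case of one dominant weight $w_i$ peeled off and handled directly. Averaging the resulting $O(\sqrt{\delta})$ bound over the choice of $J$ and the frozen part gives the estimate uniformly in the weights.

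Granting this bound, I substitute $\mathbf{NS}_\delta[f]\le C\sqrt{\delta}=C/\sqrt{2a}$ into the displayed inequality to obtain $W^{\ge a}[f]\le \tfrac{2C}{1-e^{-1}}\cdot\tfrac{1}{\sqrt{2a}}=O(1/\sqrt{a})$. To read off the stated quantitative form, I then choose $a$ so that this absolute constant times $1/\sqrt{a}$ is at most $b$, which requires exactly $a=O(1/b^2)$ and completes the argument. The only nontrivial ingredient is the $\sqrt{\delta}$ noise-sensitivity estimate; the passage from noise sensitivity to the degree-$a$ tail is the elementary spectral step carried out above.
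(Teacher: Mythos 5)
The paper does not prove this lemma at all: it is imported verbatim from the cited reference (Peres, ``Noise stability of weighted majority''), so there is no internal argument to compare against. Your reduction from the Fourier tail bound to noise sensitivity is the standard one and is fully correct: the identity $\mathbf{NS}_\delta[f]=\tfrac12\sum_S\bigl(1-(1-2\delta)^{|S|}\bigr)\hat{f}(S)^2$, the truncation to $|S|\ge a$, and the choice $\delta=1/(2a)$ give $W^{\ge a}[f]\le\tfrac{2}{1-e^{-1}}\,\mathbf{NS}_{1/(2a)}[f]$ with no loose ends, and combined with $\mathbf{NS}_\delta[f]=O(\sqrt{\delta})$ this yields $W^{\ge a}[f]=O(1/\sqrt{a})$ and hence the stated $a=O(1/b^2)$. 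So you have correctly isolated exactly what must be proved and carried out the elementary half of the argument.

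The one caveat concerns your sketch of the core estimate $\mathbf{NS}_\delta[f]\le C\sqrt{\delta}$. The rerandomization coupling and the reduction to a straddling probability for $\sum_{i\in J}w_iz_i$ around a frozen threshold are fine, but the Berry--Esseen route you outline only works directly when the restricted weight vector is ``regular'' (no dominant coordinates); for arbitrary $\boldw$ the peeling of large weights is not a one-line afterthought but a critical-index style induction that carries its own bookkeeping. Peres's actual proof avoids Gaussian comparison entirely: it randomly partitions $[n]$ into $m\approx 1/\delta$ blocks, observes that rerandomizing a uniformly random block simulates the noise, and bounds the flip probability by the purely combinatorial fact that for independent symmetric summands $Y_1,\dots,Y_m$ the expected number of indices $j$ with $\sign(\sum_iY_i-2Y_j)\ne\sign(\sum_iY_i)$ is $O(\sqrt{m})$, which follows from $\E\lvert\sum_{i=1}^m\epsilon_i\rvert\le\sqrt{m}$. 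That argument is uniform in the weights with no case analysis. Your plan is salvageable but substantially harder to complete than your description suggests; since the lemma is anyway a citation, the clean course is either to quote Peres's theorem as a black box (as the paper does) or to reproduce his symmetrization argument rather than the Berry--Esseen one.
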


\paragraph*{Random Sampling from Convex Bodies (Section~\ref{section:volume}\ifthenelse{\boolean{ISIT}}{)}{ and Section~\ref{section:experimental})}} In the sequel we require an algorithm that is given a set of constraints that define a convex body~$\cB\subseteq \bR^n$, and returns a point which is chosen uniformly at random from it. 
In particular, we focus on the \textit{Hit-and-Run} (H\&R) algorithm~\cite{Smith2}, which works well in theory~\cite{HitAndRun} as well as in our experimental results\ifthenelse{\boolean{ISIT}}{}{ (Section~\ref{section:experimental})}. This algorithm begins with a ``sufficiently random'' starting point~$\boldv_0$, chooses a random direction~$\boldl\in\bR^n$, chooses a uniformly random point~$\boldv_1$ from the chord~$\{ \boldv+t\boldl\vert t\in\bR \}\cap\cV$, and repeats the process. After~$O^*(n^3\frac{1}{\epsilon^2}\ln(\frac{2}{\epsilon}))$ of these steps, it is known that the resulting distribution is~$\epsilon$-close to uniform, but in practice convergence is apparent much faster. Thanks to Lemma~1 of~\cite{Smith1}, to generate multiple random points in~$\cV$ one does not need to run the algorithm anew for each point, and consecutive points are sufficient. To simplify our analysis, and since H\&R performs very well in practice, we neglect the error that is introduced by H\&R.
\ifISIT\else
\paragraph*{Hypercube Symmetries, Boolean Arithmetic, and Group Actions (Section~\ref{section:CosetLemma})} An \textit{automorphism} of a graph~$G=(V,E)$ is an injective function~$\sigma:V\to V$ which preserves edge-vertex connectivity, and the set of all automorphisms of a graph form a group~$\aut(G)$ under composition. The Boolean field~$\bF_2$ is the set~$\{\pm 1 \}$ with the actions~$\oplus$ and~$\odot$, where~$x\odot y=-1$ if and only if~$x=y=-1$ and~$x\oplus y=-1$ if and only if~$x\ne y$. The set~$\bF_2^n$ is a vector space, and for vectors~$\{ \boldv_i \}$ in it we denote their linear span over~$\bF_2$ by~$\Span_{\bF_2}\{\boldv_i\}$.

We shall make use of the automorphism group~$\aut(G)$ of the Boolean hypercube graph, whose vertices are~$\bF_2^n$, and two vertices are connected if their respective Hamming distance equals one (i.e., they are distinct in precisely one entry). It is widely known (\cite[Prob.~3.11]{Leighton}) that~$\aut(G)=S_n\times \bF_2^n$, where~$S_n$ is the permutation group on~$[n]$. That is, every~$\sigma\in \aut(G)$ corresponds to a permutation~$\pi\in S_n$ and a vector~$\boldv\in\bF_2^n$ such that~$\sigma(\boldx)=(x_{\pi(1)},\ldots,x_{\pi(n)})\oplus\boldv\triangleq\pi(\boldx)\oplus\boldv$, and hence we denote~$\sigma=(\pi,\boldv)$. It is an easy exercise to verify that if~$\sigma=(\pi,\boldv)$ then~$\sigma^{-1}=(\pi^{-1},\pi^{-1}(\boldv))$. Finally, for~$\boldw\in\bR^n$ and~$\sigma=(\pi,\boldv)\in\aut(G)$ we let~$\sigma(\boldw)\triangleq \pi(\boldw)\star \boldv$, where~$\star$ is the point-wise product over~$\bR$, and notice that~$\sigma$ is an invertible linear operator over~$\bR$, whose determinant  is either~$1$ or~$-1$.

For a set~$\cX\subseteq \bF_2^n$ let~$\stab(\cX)\subseteq \aut(G)$ be the set of all~$\sigma\in\aut(G)$ such that~$\sigma(\boldx)=\boldx$ for every~$\boldx\in\cX$, and notice that~$\stab(\cX)$ is a subgroup of~$\aut(G)$. Let~$\bF_2^n/\cX$ be the set of all cosets of~$\cX$, i.e., all sets of the form~$\cX\oplus \boldv\triangleq\{\boldx\oplus\boldv\vert \boldx\in\cX\}$ for some~$v\in\bF_2^n$. For $\cC_1,\cC_2\in\bF_2^n/\cX$ we say that~$\cC_1\sim \cC_2$ if there exists~$\sigma\in\stab(\cX)$ such that~$\sigma(\cC_1)=\cC_2$. Since $\stab(\cX)$ is a group, we have that~$\sim$ is an equivalence relation, and as such, partitions $\bF_2^n/\cX$ into~$t$ disjoint equivalence classes~$\cO_1,\ldots,\cO_t$ for some~$t$, each of which is called an \textit{orbit}.
\fi
\section{Basic Relations}\label{section:basic}
We begin by making the following observation.
\begin{align*}
\E_{h_1,h_2}d(h_1,h_2)&=\E_{h_1,h_2}\left[ \frac{1-\E_\boldx[h_1(\boldx)h_2(\boldx)]}{2} \right]\\
&=\E_{h_1,h_2}\left[ \frac{1-\inp{h_1}{h_2}}{2} \right]\\
&=\frac{1-\E_{h_1,h_2}[\inp{h_1}{h_2}]}{2}.
\end{align*}
Therefore, computing~$\E_{h_1,h_2}d(h_1,h_2)$ is equivalent to computing~$\E_{h_1,h_2}[\inp{h_1}{h_2}]$. We shall focus on the latter, for which we have
\begin{align}\label{equation:mainRelation}
\E_{h_1,h_2}[\inp{h_1}{h_2}]&=\E_{h_1,h_2}[\E_\boldx [h_1(\boldx)h_2(\boldx)]]\nonumber
\\&\overset{(a)}{=}\E_{\boldx}[\E_{h_1,h_2}[h_1(\boldx)h_2(\boldx)]]\overset{(b)}{=}\E_\boldx\left( \E_h h(\boldx) \right)^2\nonumber\\
&\overset{(c)}{=}\E_\boldx H(\boldx)^2\overset{(d)}{
	=}\sum_{S\subseteq[n]}\hat{H}(S)^2,
\end{align}
where~$(a)$ holds since the probability spaces are finite, $(b)$ holds since~$h_1$ and~$h_2$ are chosen independently, in~$(c)$ we denote~$H(\boldx)\triangleq \E_h h(\boldx)$, and~$(d)$ follows from Parseval's identity. Notice that the function~$H(\boldx)$ satisfies 
\begin{align}\label{equation:Hcomputation}
H(\boldx)&=\sum_{h\in\cH}\Pr(h)\cdot h(\boldx)\nonumber\\
&=\sum_{h\in\cH\vert h(\boldx)=1}\Pr(h)-\sum_{h\in\cH\vert h(\boldx)=-1}\Pr(h)\nonumber\nonumber\\
&= \Pr_{\boldw\in\cV}(\boldw\cdot \boldx\ge 0)-\Pr_{\boldw\in\cV}(\boldw\cdot \boldx <0)\nonumber\\
&= 2\Pr_{\boldw\in\cV}(\boldw\cdot \boldx\ge 0)-1,
\end{align}
where~$\boldw\in \cV$ is chosen according to the distribution on~$\cV$ that is induced by~$\cH$.

\section{Fourier Analytic Approximation of the Expected Diameter}\label{section:Fourier}

In this section we use the fact that~$\bE_{h_1,h_2}\inp{h_1}{h_2}=\sum_{S\subseteq [n]}\hat{H}(S)^2$~\eqref{equation:mainRelation}. To this end, we first observe that for every~$S\subseteq [n]$,
\begin{align}\label{equation:FourierH}
\hat{H}(S)&=\bE_\boldx \chi_S(\boldx)H(\boldx)=\bE_\boldx\chi_S(\boldx)\bE_h h(\boldx)\nonumber\\
&=\bE_h \bE_\boldx \chi_S(\boldx)h(\boldx)=\bE_h \hat{h}(S).
\end{align}
Namely, the Fourier spectrum of~$H$ is the expectation of the Fourier spectra of~$h\in\cH$. For every function~$f:\{ \pm 1 \}^n\to[-1,1]$, every~$S\subseteq [n]$, and every\footnote{By abuse of notation, the~$\boldx_i$'s are not necessarily distinct. This reflects the uniformly random choice of~$\boldx_i$'s.}~$\cX=\{ \boldx_i \}_{i=1}^k\subseteq\{ \pm 1 \}^n$ we define
\begin{align*}
\epsilon_{f,S,\cX}\triangleq \hat{f}(S)-\frac{1}{k}\sum_{i=1}^k\chi_S(\boldx_i)f(\boldx_i).
\end{align*}
Namely,~$\epsilon_{f,S,\cX}$ measures how well $\ell(S)\triangleq\frac{1}{k}\sum_{i=1}^k\chi_S(\boldx_i)f(\boldx_i)$ approximates~$\hat{f}(S)$ when one observes that values on~$\cX$. 
We say that a set~$\cX\subseteq \{\pm 1\}^n$ is \textit{$(\epsilon,S)$-good for~$f$} if~$\epsilon_{f,S,\cX}\le \epsilon$, and otherwise it is \textit{$(\epsilon,S)$-bad for~$f$}. By Hoeffding's inequality, for every~$f$, $S$, and~$\epsilon$ we have that
\begin{align*}
\Pr(|\hat{f}(S)-\frac{1}{k}\sum_{i=1}^k\chi_S(\boldx_i)f(\boldx_i)|>\epsilon)\le 2e^{-\frac{k\epsilon^2}{2}},
\end{align*}
i.e., a fraction of at most~$2e^{-k\epsilon^2/2}$ of the possible~$\cX$'s are~$(\epsilon,S)$-bad for~$f$. Since there are at most~$2^{n^2}$~$\sign$ functions on~$n$ variables~\cite[Thm.~4.3]{Shuki}, it follows by a union bound that a fraction of at least~$1-2^{n^2+1}e^{-k\epsilon^2/2}$ is $(\epsilon,S)$-good for \textit{all}~$\sign$ functions.

In particular, for every dataset~$\cD=\{ (\boldx_i,y_i) \}_{i=1}^k$ we have that
\begin{align}\label{equation:epsh}
\epsilon_{h,S,\cX} = \hat{h}(S)-\frac{1}{k}\sum_{i=1}^k\chi_S(\boldx_i)y_i
\end{align}
for every~$h\in\cH(\cD)$, where~$\cX=\{ \boldx_i \}_{i=1}^k$. Taking the mean over~$\cH$ in~\eqref{equation:epsh} yields
\begin{align}\label{equation:Heps}
\E_{h}\epsilon_{h,S,\cX}=\hat{H}(S)-\frac{1}{k}\sum_{i=1}^k\chi_S(\boldx_i)y_i,
\end{align}
and by definition, the right hand side of~\eqref{equation:Heps} equals~$\epsilon_{H,S,\cX}$. Therefore, whenever~$\cX$ is~$(\epsilon,S)$-good for all~$\sign$ functions, it follows that~$\cX$ is~$(\epsilon,S)$-good for~$H$ as well, and one can use~\eqref{equation:Heps} to get an $\epsilon$ approximation of~$\hat{H}(S)$.



To avoid accumulating error terms and to keep our algorithm polynomial, we would like to apply this approximation of~$\hat{H}(S)$ for a small number of sets~$S$. Hence, \ifthenelse{\boolean{ISIT}}{in~\cite{arXiv} we prove}{we prove} the following Fourier concentration bound on~$H$, which follows from Lemma~\ref{lemma:Peres} by the Cauchy-Schwartz inequality, and depends on the parameter~$c(\cH)=|\cH|\sum_{h\in \cH} \Pr(h)^2$.

\begin{lemma}\label{lemma:CauchySchwartz}
	For~$a\in\bN$ and~$b\in\bR$, if $W^{\ge a}[h]\le b$ for every $h\in\cH$, then~$W^{\ge a}[H]\le b\cdot c(\cH)$,
	and therefore,
	$   \bE_{h_1,h_2}\inp{h_1}{h_2}-b\cdot c(\cH)\le\sum_{|S|< a}\hat{H}(S)^2\le \bE_{h_1,h_2}\inp{h_1}{h_2}$.
\end{lemma}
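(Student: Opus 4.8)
The plan is to bound $W^{\ge a}[H]$ one Fourier level at a time and then sum over degrees. The natural starting point is~\eqref{equation:FourierH}, which gives $\hat{H}(S)=\E_h\hat{h}(S)=\sum_{h\in\cH}\Pr(h)\hat{h}(S)$; that is, each Fourier coefficient of $H$ is a convex combination of the corresponding coefficients of the individual hypotheses. To turn the per-hypothesis bound $W^{\ge a}[h]\le b$ into a bound on $W^{\ge a}[H]$, I would apply the Cauchy--Schwarz inequality to this combination against the all-ones vector, namely $\left(\sum_{h\in\cH}\Pr(h)\hat{h}(S)\right)^2\le |\cH|\sum_{h\in\cH}\Pr(h)^2\hat{h}(S)^2$. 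This is precisely the step that produces the factor $|\cH|$ and hence the quantity $c(\cH)$.

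With this pointwise estimate in hand, I would sum over all $S$ with $|S|\ge a$ and interchange the two finite, nonnegative sums over $S$ and over $h$; no convergence issue arises since both index sets are finite and every summand is nonnegative. The interchange gives $W^{\ge a}[H]\le |\cH|\sum_{h\in\cH}\Pr(h)^2\sum_{|S|\ge a}\hat{h}(S)^2=|\cH|\sum_{h\in\cH}\Pr(h)^2\,W^{\ge a}[h]$. Invoking the hypothesis $W^{\ge a}[h]\le b$ for every $h$ and factoring out $b$ yields $W^{\ge a}[H]\le b\cdot|\cH|\sum_{h\in\cH}\Pr(h)^2=b\cdot c(\cH)$, by the definition of $c(\cH)$. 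This establishes the first assertion, and is essentially the whole content of the lemma; Lemma~\ref{lemma:Peres} is what guarantees that a suitable $a=O(1/b^2)$ exists for which the hypothesis is satisfied.

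For the ``and therefore'' double inequality, I would use the identity $\bE_{h_1,h_2}\inp{h_1}{h_2}=\sum_{S\subseteq[n]}\hat{H}(S)^2$ from~\eqref{equation:mainRelation} and split the Fourier weight of $H$ by degree: $\sum_{S\subseteq[n]}\hat{H}(S)^2=\sum_{|S|<a}\hat{H}(S)^2+W^{\ge a}[H]$. Since $W^{\ge a}[H]\ge 0$, dropping it gives the upper bound $\sum_{|S|<a}\hat{H}(S)^2\le\bE_{h_1,h_2}\inp{h_1}{h_2}$. Rearranging the same identity as $\sum_{|S|<a}\hat{H}(S)^2=\bE_{h_1,h_2}\inp{h_1}{h_2}-W^{\ge a}[H]$ and substituting $W^{\ge a}[H]\le b\cdot c(\cH)$ from the first part produces the matching lower bound, completing the two-sided estimate.

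I expect no genuine obstacle here beyond bookkeeping; the only real decision is which splitting to use in Cauchy--Schwarz. I note that pairing $\Pr(h)\hat{h}(S)$ instead as $\sqrt{\Pr(h)}\cdot\bigl(\sqrt{\Pr(h)}\,\hat{h}(S)\bigr)$ would give the sharper $W^{\ge a}[H]\le b$, since $\sum_{h\in\cH}\Pr(h)=1$. The two bounds coincide for $\cH_{uni}$, where $c(\cH)=1$, which is the regime of interest; the stated $c(\cH)$-dependent form is the natural one to carry forward, as it isolates $c(\cH)$ as the relevant complexity parameter governing the downstream approximation.
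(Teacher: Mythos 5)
Your proof is correct and takes essentially the same route as the paper's: both start from $\hat{H}(S)=\sum_{h\in\cH}\Pr(h)\hat{h}(S)$, apply Cauchy--Schwarz, and sum over $|S|\ge a$, the only (immaterial) difference being the pairing --- you split against the all-ones vector while the paper pairs $\Pr(h)$ against $\hat{h}(S)$, and both yield exactly $b\cdot c(\cH)$. Your side remark that the splitting $\sqrt{\Pr(h)}\cdot\bigl(\sqrt{\Pr(h)}\,\hat{h}(S)\bigr)$ gives the sharper bound $W^{\ge a}[H]\le b$ is also correct, since $c(\cH)\ge 1$ for every distribution.
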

\ifthenelse{\boolean{ISIT}}{}{
\begin{proof} 
	We have:
		\begin{align*}
		W^{\ge a}[H]&=\sum_{|S|\ge a}\hat{H}(S)^2\overset{\footnotesize{\eqref{equation:FourierH}}}{=}\sum_{|S|\ge a} \left( \bE_h \hat{h}(S) \right)^2\\
		&=\sum_{|S|\ge a}\left( \sum_{h\in\cH}\Pr(h)\cdot \hat{h}(S) \right)^2\\
		&\overset{(\dagger)}{\le}\sum_{|S|\ge a}\left( \sum_{h\in\cH}\Pr(h)^2 \right) \left( \sum_{h\in\cH}\hat{h}(S)^2 \right)\\
		&=\sum_{h_1\in\cH}\sum_{h_2\in \cH}\Pr(h_1)^2\sum_{|S|\ge a}\hat{h}_2(S)^2 \\ 
		&\overset{(\ddagger)}{\le} b \sum_{h_1\in\cH}\sum_{h_2\in \cH}\Pr(h_1)^2\\
		&=b|\cH|\cdot \sum_h\Pr(h)^2=b\cdot c(\cH),
		\end{align*}
	where~$(\dagger)$ follows from the Cauchy-Schwartz inequality, and~$(\ddagger)$ from~$W^{\ge a}[h]\le b$. The second part of the lemma follows directly from~\eqref{equation:mainRelation}.
\end{proof}}



Therefore, we shall approximate~$\bE_{h_1,h_2}\inp{h_1}{h_2}$ by $\sum_{|S|<a}\ell(S)^2$ for some constant~$a>0$. Since $\chi_S(\boldx)=\prod_{j\in S}x_j$ for every~$S\subseteq [n]$, this approximation can be computed in~$\binom{n}{a}\cdot O(ka)$ time (precise~$k$ will be chosen shortly). 

For a given~$S\subseteq [n]$, it was shown earlier that a fraction of at least~$1-2^{n^2+1}e^{-k\epsilon^2/2}$ of the~$\cX$'s is~$(\epsilon,S)$-good for all~$\sign$ functions. It follows that a fraction of at least~$1-\binom{n}{<a}2^{n^2+1}e^{-k\epsilon^2/2}$ of~$\cX$'s is~$(\epsilon,S)$-good for all~$\sign$ functions and all~$S$ with~$|S|<a$, where~$\binom{n}{< a}\triangleq \sum_{j=0}^{a-1} \binom{n}{j}$.

Now, since~$|\ell(S)|\le 1$, it follows that
\begin{align*}
\hat{H}(S)^2-2|\epsilon_{H,S,\cX}|-\epsilon_{H,S,\cX}^2&\le\ell(S)^2\\&\le\hat{H}(S)^2+2|\epsilon_{H,S,\cX}|-\epsilon_{H,S,\cX}^2.
\end{align*}
Hence, whenever~$\cX$ is~$(\epsilon,S)$-good for all~$\sign$ functions and every~$S$ with~$|S|<a$, Lemma~\ref{lemma:Peres} and Lemma~\ref{lemma:CauchySchwartz} imply that
\begin{align}\label{equation:FrourierBound}
	&\bE_{h_1,h_2}\inp{h_1}{h_2}-b\cdot c(\cH)-2\binom{n}{< a}\epsilon - \binom{n}{<a}\epsilon^2 \nonumber\\
	&\le \sum_{|S|<a}\ell(S)^2\nonumber\\&\le \bE_{h_1,h_2}\inp{h_1}{h_2}+2\binom{n}{< a}\epsilon,
	\end{align}
where~$b=O(1/\sqrt{a})$.

Clearly, to have a meaningful asymptotic conclusion from~\eqref{equation:FrourierBound}, we must have~$\epsilon=o(\frac{1}{n^a})$. Specifically, we wish to find~$k$ and~$\epsilon=o(\frac{1}{n^a})$ for which the probability to have a random~$\cX$ which is~$(\epsilon,S)$-good for all~$\sign$ functions and every~$|S|<a$, is exponentially large (say~$1-e^{-n}$). To this end, we solve
\begin{align*}
e^{-n}&= \binom{n}{<a}2^{n^2+1} e^{-\frac{k\epsilon^2}{2}}\\
\epsilon&= \sqrt{ \frac{2}{k} \left( n+\ln\left[ \binom{n}{<a} 2^{n^2+1} \right] \right) },
\end{align*}
and demand that~$\epsilon=o(\frac{1}{n^a})$. It is readily verified that~$k=\Omega(n^{2a+2+\lambda})$ suffices for every~$\lambda>0$, which gives rise to the following theorem (notice that~$c(\cH_{uni})=1$).


\begin{theorem}
	Whenever~$k=\Omega(n^{2a+2+\lambda})$ for constants~$a,\lambda>0$ we have that
	\ifthenelse{\boolean{ISIT}}{
		\begin{align*}
		\bE_{h_1,h_2}\inp{h_1}{h_2}-\frac{c(\cH)}{\Omega(\sqrt{a})}+o(1)&\le\sum_{|S|<a}\ell(S)^2\\
		&\le\bE_{h_1,h_2}\inp{h_1}{h_2}+o(1),
		\end{align*}
	}{
		\begin{align*}
		\bE_{h_1,h_2}\inp{h_1}{h_2}-\frac{c(\cH)}{\Omega(\sqrt{a})}+o(1)\le\sum_{|S|<a}\ell(S)^2\le\bE_{h_1,h_2}\inp{h_1}{h_2}+o(1),
		\end{align*}
	}
	for all but exponentially small fraction of possible datasets. Namely, for probability distributions on~$\cH$ whose respective~$c(\cH)$ is constant, one can approximate~$\bE_{h_1,h_2}\inp{h_1}{h_2}$ with high probability up to to arbitrary (constant) precision in polynomial time, while operating on polynomially many points.
\end{theorem}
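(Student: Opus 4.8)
The plan is to assemble the ingredients already in place, since the two-sided inequality~\eqref{equation:FrourierBound} carries essentially all the analytic content; what remains is to substitute the concentration rate from Lemma~\ref{lemma:Peres}, check that the sampling errors are negligible, and calibrate the sample size~$k$ against the accuracy~$\epsilon$ so that the goodness event holds for all but an exponentially small fraction of datasets.

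First I would invoke~\eqref{equation:FrourierBound}, which holds whenever~$\cX$ is~$(\epsilon,S)$-good for all~$\sign$ functions and every~$S$ with~$|S|<a$. By Lemma~\ref{lemma:Peres} the concentration parameter may be taken as~$b=O(1/\sqrt a)$, so that the penalty~$b\cdot c(\cH)$ appearing on the left becomes~$c(\cH)/\Omega(\sqrt a)$, matching the term in the statement. It then remains to show the sampling terms~$2\binom{n}{<a}\epsilon+\binom{n}{<a}\epsilon^2$ are~$o(1)$. Since~$a$ is constant we have~$\binom{n}{<a}=\Theta(n^{a-1})$, so demanding~$\epsilon=o(1/n^a)$ forces both terms to be~$o(1/n)=o(1)$, collapsing~\eqref{equation:FrourierBound} into the two displayed inequalities.

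Next I would control the failure probability. Combining Hoeffding's inequality with a union bound over the~$\le 2^{n^2}$ distinct~$\sign$ functions and over the~$\binom{n}{<a}$ relevant sets~$S$, the fraction of~$\cX$ that fail to be~$(\epsilon,S)$-good for all of them is at most~$\binom{n}{<a}2^{n^2+1}e^{-k\epsilon^2/2}$. Setting this equal to~$e^{-n}$ and solving gives~$\epsilon=\sqrt{\tfrac{2}{k}(n+\ln[\binom{n}{<a}2^{n^2+1}])}$; imposing the accuracy constraint~$\epsilon=o(1/n^a)$ then fixes the sample size. As the dominant contribution to the logarithm is~$n^2\ln 2$, the requirement reads~$k\epsilon^2=\Omega(n^2)$, and taking~$\epsilon=\Theta(n^{-a-\lambda/2})$ (which is~$o(1/n^a)$ for~$\lambda>0$) yields~$k=\Omega(n^2\cdot n^{2a+\lambda})=\Omega(n^{2a+2+\lambda})$, exactly the hypothesis. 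On the resulting event, of probability~$1-e^{-n}$, the conclusion holds.

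The main obstacle is precisely this balancing in the last step: the~$2^{n^2}$ factor, from the crude count of~$\sign$ functions on~$n$ variables, dominates the union bound and forces~$k\epsilon^2$ to scale like~$n^2$, while the tightness of the estimator simultaneously demands~$\epsilon=o(1/n^a)$ and so drives~$\epsilon$ down and~$k$ up; reconciling these opposing pressures is what pins the exponent at~$2a+2+\lambda$. The concluding remark then follows immediately: for distributions with~$c(\cH)=O(1)$, e.g.~$c(\cH_{uni})=1$, one takes the constant~$a$ as large as desired to force~$c(\cH)/\Omega(\sqrt a)$ below any target precision, while the estimator~$\sum_{|S|<a}\ell(S)^2$ is evaluable in~$\binom{n}{a}\cdot O(ka)$ time, polynomial in~$n$ for constant~$a$ and polynomial~$k$.
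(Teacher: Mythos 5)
Your proposal is correct and follows essentially the same route as the paper: it combines the Fourier concentration bound of Lemma~\ref{lemma:CauchySchwartz} with the sandwich inequality~\eqref{equation:FrourierBound}, performs the same Hoeffding-plus-union-bound over the~$2^{n^2}$ $\sign$ functions and the~$\binom{n}{<a}$ low-degree sets, and calibrates~$\epsilon=o(1/n^a)$ against~$k\epsilon^2=\Omega(n^2)$ to arrive at~$k=\Omega(n^{2a+2+\lambda})$. The quantitative details (the~$\Theta(n^{a-1})$ size of~$\binom{n}{<a}$, the dominance of the~$n^2\ln 2$ term, and the final complexity~$\binom{n}{a}\cdot O(ka)$) all match the paper's derivation.
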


\section{Approximations for the Volume Distribution}\label{section:volume}
The algorithms below require random sampling from~$\cV$, for which the H\&R algorithm is used. We emphasize that every use of the H\&R algorithm requires a ``warm-up'', after which the points are sufficiently random. Moreover, choosing a point uniformly at random from the chord at each step can be done in~$O(nk)$ time\ifthenelse{\boolean{ISIT}}{}{ (Lemma~\ref{lemma:rhoV} in Appendix~\ref{SM:Omitted})}. \ifthenelse{\boolean{ISIT}}{Due to lack of space, the analyses are given in the full version of this paper~\cite{arXiv}}{For the sake of brevity, we omit the warm-up phase from the complexity analysis}. 

\paragraph*{The Direct Algorithm (\textbf{DIR})} Let~$m=m(\epsilon,\eta)$, $\ell=\ell(\epsilon,\eta)$ be integers that will be computed in the sequel. This algorithm chooses~$m$ pairs~$(\boldw_{i_t},\boldw_{j_t})_{t=1}^m$ and~$\ell$ binary vectors~$\boldz_{t,j}$ for every~$t\in[m]$, and returns $$\textbf{est}_D\triangleq\frac{1}{m\ell}\sum_{t=1}^m\sum_{j=1}^\ell \sign(\boldw_{i_t}\boldz_{t,j})\sign(\boldw_{j_t}\boldz_{t,j}).$$It is readily verified that the complexity of this approximation is~$O(mn(k+\ell))$. By repeated applications of Hoeffding's inequality\ifthenelse{\boolean{ISIT}}{}{, that are detailed in Appendix~\ref{SM:DIR}}, it follows that
\begin{align*}
\ifthenelse{\boolean{ISIT}}{
\Pr\left(\left| \textbf{est}_D-\E_{h_1,h_2}\inp{h_1}{h_2}\right|  \le \epsilon  \right) &\ge 2\left( 1-e^{-\frac{m\delta^2}{2}} \right)\cdot\\
& \left( 1-e^{-\frac{\ell(\epsilon-\delta)^2}{2}} \right)^m-1,
}{
\Pr\left(\left| \textbf{est}_D-\E_{h_1,h_2}\inp{h_1}{h_2}\right|  \le \epsilon  \right) \ge 2\left( 1-e^{-\frac{m\delta^2}{2}} \right)\cdot \left( 1-e^{-\frac{\ell(\epsilon-\delta)^2}{2}} \right)^m-1,
}
\end{align*}
where~$\epsilon=\delta+\mu$. Hence, for example, one can choose~$\delta=\frac{\epsilon}{2}$ and~$m=\frac{c}{\epsilon^2}$ for some constant~$c$, and then
\begin{align*}
\ell=-\frac{8}{\epsilon^2}\ln\left( 1-\left( 1-\frac{1-\frac{\eta}{2}}{1-e^{-c/8}} \right)^{\epsilon^2/c} \right),
\end{align*}
and the overall complexity is
\begin{align*}
O\left(\frac{nk}{\epsilon^2}+\frac{n}{\epsilon^4}\ln\left(  1-\left( 1-\frac{1-\frac{\eta}{2}}{1-e^{-c/8}} \right)^{\epsilon^2/c} \right)^{-1} \right).
\end{align*}
\paragraph*{The Alternative Algorithm (\textbf{ALT})} Let~$s=s(\epsilon,\eta)$ and~$r=r(\epsilon,\eta)$ be integers that will be computed in the sequel. This algorithm estimates~$\E_{h_1,h_2}\inp{h_1}{h_2}$ by using its equality to~$\E_\boldx H(\boldx)^2$, which in turn equals~$\E_{\boldx}(2\Pr_{\boldw\in\cV}(\boldw\cdot \boldx\ge 0)-1)^2$ (see Section~\ref{section:preliminaries}). Na\"{i}vely, one can estimate this quantity as $\frac{1}{r}\sum_{i=1}^r\left(2\cdot\frac{1}{s}\sum_{j=1}^s\1(\boldw_{j}\boldz_i\ge 0)-1\right)^2$
where~$\1$ is a Boolean indicator, and where the~$\boldz_i$'s and~$\boldw_{j}$'s are chosen uniformly at random from~$\{\pm1\}^n$ and from~$\cV$, respectively. However, \ifthenelse{\boolean{ISIT}}{in the full version}{in Appendix~\ref{SM:Bernoulli}} it is shown that the following approximation is usually better.
\begin{align}\label{equation:alternativeApprox}
\ifthenelse{\boolean{ISIT}}{
&\textbf{est}_A\triangleq\nonumber\\
&\frac{1}{r}\sum_{i=1}^r\left( -4\left(\frac{1}{s/2}\sum_{j=1}^{s/2}(1-\1_{i,j,i})\1_{i,j+\frac{s}{2},i} \right)+1 \right),
}{
\textbf{est}_A\triangleq\frac{1}{r}\sum_{i=1}^r\left( -4\left(\frac{1}{s/2}\sum_{j=1}^{s/2}(1-\1_{i,j,i})\1_{i,j+\frac{s}{2},i} \right)+1 \right),
}
\end{align}
where~$\1_{a,b,c}$ stands for~$\1(\boldw_{a,b}\boldz_c\ge 0)$. The complexity of this algorithm is~$O(sn(k+r))$. \ifthenelse{\boolean{ISIT}}{A straightforward probabilistic analysis shows that}{According to a probabilistic analysis that is given in Appendix~\ref{SM:ALT}, we have that}
\begin{align*}
\ifthenelse{\boolean{ISIT}}{
\Pr\left( \left| \textbf{est}_A-\E_\boldx H(\boldx)^2 \right| \le \delta+4\mu \right)\ge& 2\left(1-e^{-\frac{r\delta^2}{2}}  \right)\cdot\\( 1&-r\cdot e^{-s\mu^2} )-1,
}{
\Pr\left( \left| \textbf{est}_A-\E_\boldx H(\boldx)^2 \right| \le \delta+4\mu \right)\ge 2\left(1-e^{-\frac{r\delta^2}{2}}  \right)\cdot( 1-r\cdot e^{-s\mu^2} )-1,
}
\end{align*}
where~$\epsilon=\delta+4\mu$. Once again, we choose, say, $\delta=\frac{\epsilon}{2}$ and~$r=\frac{c'}{\epsilon^2}$ for some constant~$c'$, and get
\begin{align*}
s=-\frac{64}{\epsilon^2}\log\left( \frac{\epsilon^2}{c'} \left( 1-\frac{1-\frac{\eta}{2}}{1-e^{-c'/8}} \right) \right),
\end{align*}
and the overall complexity is
\begin{align*}
O\left( \frac{n}{\epsilon^2}\log \left( \frac{\epsilon^2}{c'}\left( 1-\frac{1-\frac{\eta}{2}}{1-e^{-c'/8}} \right) \right)^{-1}\left( k+\frac{c'}{\epsilon^2} \right) \right).
\end{align*}
Practically, in \ifthenelse{\boolean{ISIT}}{the experiments (Fig.~\ref{figure:BRG})}{Section~\ref{section:experimental}} we run \textbf{DIR} and \textbf{ALT} on randomly generated datasets until convergence is apparent. While the resulting approximations are comparable, \textbf{ALT} demonstrates faster convergence times as the number of sampled~$\boldz$'s ($\ell$ in \textbf{DIR} and~$r$ in \textbf{ALT}) grows. This phenomenon is yet to be explained.


\ifISIT
\section{Additional Topics}\label{section:additional}
\subsection{Expected Diameter of Structured Data}\label{section:CosetLemma}
If the set~$\cX\triangleq \{ \boldx\vert \exists y, (\boldx,y)\in\cD \}$ is an affine Boolean subspace of~$\bF_2^n$ (where~$0$ is identified as~$1$ and~$1$ as~$-1$), then certain uniformity of the expected diameter on cosets of~$\cX$ holds, and can be used to reduce the amount of required computations. In the following theorem the distribution is either $\cH_{uni}$ and~$\cH_{vol}$, and~$d_{\cC}$ is distance restricted to some~$\cC\subseteq \{ \pm 1 \}^n$.
\begin{theorem}\label{theorem:orbits}
	Assume that~$\bF_2^n/\cX$ is partitioned to the orbits~$\cO_1,\ldots,\cO_t$ under the action of the automorphism group of the hypercube graph. For arbitrarily chosen~$\cC_i\in\cO_i$ for $i\in[t]$, we have
	\begin{align*}
	\E_{h_1,h_2}d(h_1,h_2)&=\frac{|\cX|}{2^n}\sum_{i=1}^t|\cO_i|\E_{h_1,h_2}d_{\cC_i}(h_1,h_2).
	\end{align*}
\end{theorem}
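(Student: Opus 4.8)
The plan is to reduce the whole statement to a single invariance property of the restricted expected diameter under the stabilizer $\stab(\cX)$, with everything else being a regrouping of the sum that defines $d$.

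First I would decompose the distance over cosets. Because $\cX$ is an affine subspace, the sets $\cX\oplus\boldv$ are genuine cosets of the underlying linear subspace, so $\bF_2^n/\cX$ partitions $\{\pm1\}^n$ into $2^n/|\cX|$ parts, each of size $|\cX|$. Writing $d_\cC(h_1,h_2)=\frac{1}{|\cX|}\sum_{\boldx\in\cC}\frac{1-h_1(\boldx)h_2(\boldx)}{2}$ and grouping the sum in the definition of $d$ by cosets gives
\begin{align*}
d(h_1,h_2)=\frac{|\cX|}{2^n}\sum_{\cC\in\bF_2^n/\cX}d_\cC(h_1,h_2).
\end{align*}
Taking expectations and then collecting the cosets into their orbits $\cO_1,\ldots,\cO_t$ yields $\E_{h_1,h_2}d(h_1,h_2)=\frac{|\cX|}{2^n}\sum_{i=1}^t\sum_{\cC\in\cO_i}\E_{h_1,h_2}d_\cC(h_1,h_2)$. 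The theorem then follows at once, provided all cosets within one orbit share the same restricted expected diameter, since that collapses the inner sum to $|\cO_i|\,\E_{h_1,h_2}d_{\cC_i}(h_1,h_2)$.

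The heart of the argument is therefore the claim that $\E_{h_1,h_2}d_{\sigma(\cC)}(h_1,h_2)=\E_{h_1,h_2}d_\cC(h_1,h_2)$ for every $\sigma\in\stab(\cX)$. To prove it I would study the action $\boldw\mapsto\sigma(\boldw)=\pi(\boldw)\star\boldv$ on weight vectors. A short computation shows $\sigma$ preserves inner products, $\sigma(\boldw)\cdot\sigma(\boldx)=\boldw\cdot\boldx$ (each $v_j$ occurs squared and the permutation acts on both factors), so the hypothesis $h_{\sigma(\boldw)}(\boldx)=\sign(\sigma(\boldw)\cdot\boldx)$ satisfies $h_{\sigma(\boldw)}\circ\sigma=h_\boldw$. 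Since $\sigma$ fixes $\cX$ pointwise, this identity forces $h_{\sigma(\boldw)}$ to be consistent exactly when $h_\boldw$ is, so $\boldw\mapsto\sigma(\boldw)$ is a bijection of $\cH$, equivalently of $\cV$. Moreover $\sigma$ is an orthogonal signed-permutation operator (determinant $\pm1$), hence norm- and volume-preserving; it therefore maps $\cV$ onto itself and preserves both distributions, $\cH_{uni}$ (it permutes the finitely many regions $\cV_h$, preserving the counting measure) and $\cH_{vol}$ (it preserves Lebesgue volume). Substituting $\boldx=\sigma(\boldx')$ in the sum defining $d_{\sigma(\cC)}$ and using $h_{\sigma(\boldw)}(\sigma(\boldx'))=h_\boldw(\boldx')$ gives the covariance relation $d_{\sigma(\cC)}(h_{\sigma(\boldw_1)},h_{\sigma(\boldw_2)})=d_\cC(h_{\boldw_1},h_{\boldw_2})$; averaging this over the measure-preserving substitution $\boldw_i=\sigma(\boldw_i')$ then yields the desired equality of expectations.

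The main obstacle I anticipate is exactly this invariance step, and specifically handling the measure-preservation uniformly for the two distributions: for $\cH_{vol}$ it rests on $\sigma$ being orthogonal so that it preserves volume and carries $\cV$ onto itself, whereas for $\cH_{uni}$ it rests on the induced map being a bijection of the finite set $\cH$ so the counting measure is preserved. The computation $\sigma(\boldw)\cdot\sigma(\boldx)=\boldw\cdot\boldx$ together with $\sigma|_{\cX}=\id$ is what drives both cases. Once the invariance holds, $\sum_{\cC\in\cO_i}\E_{h_1,h_2}d_\cC(h_1,h_2)=|\cO_i|\,\E_{h_1,h_2}d_{\cC_i}(h_1,h_2)$ for any representative $\cC_i\in\cO_i$, and substituting into the regrouped sum gives the claimed formula.
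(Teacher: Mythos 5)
Your proposal is correct and follows essentially the same route as the paper: decompose $d$ over the cosets of $\cX$, reduce the theorem to the invariance of $\E_{h_1,h_2}d_\cC$ under $\stab(\cX)$, and prove that invariance by showing the stabilizer acts on $\cV$ (and hence on $\cH$) by a consistency-, norm-, and measure-preserving bijection, handling $\cH_{uni}$ by bijectivity of $h\mapsto h^\sigma$ and $\cH_{vol}$ by volume preservation. The only difference is cosmetic: you package the key computation as the orthogonality identity $\sigma(\boldw)\cdot\sigma(\boldx)=\boldw\cdot\boldx$, while the paper's Coset Lemma uses the equivalent adjoint form $\sigma(\boldw)\cdot\boldx=\boldw\cdot\sigma^{-1}(\boldx)$ together with an explicit change of variables in the volume integral.
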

\else
\section{Expected Diameter of Structured Data}\label{section:CosetLemma}

In this section an additional appealing property of the expected diameter is revealed. It is shown that algebraic features of the set~$\cX$ can be exploited to perform significantly less computations. This result will be particularly useful whenever~$\cX$ is a \textit{subcube} of~$\{\pm 1\}^n$, and applies for both~$\cH_{uni}$ and~$\cH_{vol}$.

The main result of this section is that the expected distance is uniform on \textit{cosets} of~$\cX$. In what follows, for any subset~$\cC\subseteq \{ \pm 1 \}^n$ we define the \textit{$\cC$-restricted distance} (restricted distance, in short)
\begin{align}\label{equation:DefinitionCrest}
d_\cC(h_1,h_2)=\frac{1}{|\cC|}\sum_{\boldc\in \cC}\frac{1-h_1(\boldc)h_2(\boldc)}{2}.
\end{align}

\begin{lemma}\label{lemma:coset} (\textit{The Coset Lemma})
	Let~$\sigma\in\aut(G)$ such that~$\sigma(\boldx)=\boldx$ for all~$\boldx\in\cX$. Then, for cosets~$\cC_1$ and~$\cC_2$ of~$\cX$ such that~$\sigma(\cC_1)=\cC_2$, we have that
	\begin{align*}
	\E_{h_1,h_2}d_{\cC_1}(h_1,h_2)=\E_{h_1,h_2}d_{\cC_2}(h_1,h_2).
	\end{align*}
\end{lemma}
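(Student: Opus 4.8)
The plan is to reduce the claim to a pointwise invariance property of the function $H(\boldx)=\E_h h(\boldx)$ under the automorphism $\sigma$. First I would unwind the restricted distance in \eqref{equation:DefinitionCrest}: since $h_1$ and $h_2$ are drawn independently, the computation in Section~\ref{section:basic} gives $\E_{h_1,h_2}[h_1(\boldc)h_2(\boldc)]=(\E_h h(\boldc))^2=H(\boldc)^2$, so by linearity of expectation
\begin{align*}
\E_{h_1,h_2} d_\cC(h_1,h_2)=\frac{1}{|\cC|}\sum_{\boldc\in\cC}\frac{1-H(\boldc)^2}{2}.
\end{align*}
Because every coset of $\cX$ has size $|\cX|$, it then suffices to prove $\sum_{\boldc\in\cC_1}H(\boldc)^2=\sum_{\boldc\in\cC_2}H(\boldc)^2$. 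Since $\sigma$ is a bijection from $\cC_1$ onto $\cC_2$, re-indexing the second sum through $\boldc\mapsto\sigma(\boldc)$ reduces everything to the single identity $H(\sigma(\boldx))=H(\boldx)$ for all $\boldx$ (indeed $H(\sigma(\boldx))^2=H(\boldx)^2$ would already suffice).

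The heart of the argument is establishing this invariance, for which the key algebraic fact is that $\sigma=(\pi,\boldv)$ preserves inner products: a direct expansion gives $\sigma(\boldw)\cdot\sigma(\boldx)=\sum_j w_{\pi(j)}x_{\pi(j)}v_j^2=\boldw\cdot\boldx$, using $v_j^2=1$. In particular $\sigma$ is an orthogonal linear map on $\bR^n$ with $|\det\sigma|=1$. Next I would exploit the hypothesis that $\sigma$ fixes $\cX$ pointwise to show that $\sigma$ carries the version space $\cV$ onto itself: for $\boldw\in\cV$ and each constraint $\boldx_i\in\cX$ we have $\sigma(\boldw)\cdot\boldx_i=\sigma(\boldw)\cdot\sigma(\boldx_i)=\boldw\cdot\boldx_i$, so the sign constraints $y_i(\sigma(\boldw)\cdot\boldx_i)\ge 0$ are preserved, while $\norm{\sigma(\boldw)}=\norm{\boldw}$ preserves the norm constraint; applying the same reasoning to $\sigma^{-1}$ (which also fixes $\cX$) shows $\sigma$ is a bijection of $\cV$.

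With this in hand I would prove $H\circ\sigma=H$ for both admissible distributions. For $\cH_{vol}$, starting from $H(\sigma(\boldx))=2\Pr_{\boldw\in\cV}(\boldw\cdot\sigma(\boldx)\ge 0)-1$ and substituting $\boldw=\sigma(\boldu)$ (a valid change of variables, as $\sigma$ is a volume-preserving bijection of $\cV$ since $|\det\sigma|=1$) turns the event $\boldw\cdot\sigma(\boldx)\ge 0$ into $\boldu\cdot\boldx\ge 0$, giving $H(\sigma(\boldx))=H(\boldx)$. For $\cH_{uni}$, the analogous step is the bijection $h\mapsto h\circ\sigma$ of the finite set $\cH$: it sends halfspaces to halfspaces (if $h(\boldx)=\sign(\boldw\cdot\boldx)$ then $(h\circ\sigma)(\boldx)=\sign(\sigma^{-1}(\boldw)\cdot\boldx)$, again by inner-product preservation) and preserves consistency (since $h(\sigma(\boldx_i))=h(\boldx_i)=y_i$), with inverse $h\mapsto h\circ\sigma^{-1}$; re-indexing $\sum_{h\in\cH}h(\sigma(\boldx))=\sum_{g\in\cH}g(\boldx)$ then yields $H(\sigma(\boldx))=H(\boldx)$.

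I expect the main obstacle to be the bookkeeping that makes $\sigma$ descend from a symmetry of the cube to a symmetry of the weight space and the hypothesis set simultaneously — precisely, verifying that ``fixing $\cX$ pointwise'' is exactly the condition needed to preserve every version-space constraint, and then, via measure-preservation ($|\det\sigma|=1$) for $\cH_{vol}$ and the consistency-preserving bijection for $\cH_{uni}$, to leave the induced distribution on hypotheses unchanged. Once $H\circ\sigma=H$ is secured, the reduction in the first paragraph closes the proof immediately.
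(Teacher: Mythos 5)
Your proof is correct, and it takes a somewhat different route from the paper's. The paper works at the level of hypothesis \emph{pairs}: it shows $d_{\cC_1}(h_1^\sigma,h_2^\sigma)=d_{\cC_2}(h_1,h_2)$ by re-indexing the sum over the coset via $\sigma$, and then transfers the expectation using a measure-preserving bijection $h\mapsto h^\sigma$ of $\cH$ (for $\cH_{vol}$ this requires proving $\vol(\cV_h)=\vol(\cV_{h^\sigma})$ for \emph{each} cell of the partition of $\cV$, via a change of variables in the integral $\int_\cV \1(\boldw,h)\,d\boldw$). You instead collapse the pair expectation into the averaged function $H$ using independence, $\E_{h_1,h_2}[h_1(\boldc)h_2(\boldc)]=H(\boldc)^2$, and reduce everything to the single invariance $H\circ\sigma=H$. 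This buys a genuinely lighter $\cH_{vol}$ case: you only need the global fact $\sigma(\cV)=\cV$ together with a change of variables inside $\Pr_{\boldw\in\cV}(\boldw\cdot\boldx\ge 0)$, rather than cell-by-cell volume preservation. Your key algebraic observation that $\sigma$ acts on $\bR^n$ as a signed permutation and hence preserves inner products ($\sigma(\boldw)\cdot\sigma(\boldx)=\boldw\cdot\boldx$, since $v_j^2=1$) is the same engine that drives the paper's computations, just stated once and reused. The paper's pairwise formulation is marginally more general in that it would survive a non-product $\sigma\times\sigma$-invariant joint law on $(h_1,h_2)$, whereas your reduction to $H$ leans on the i.i.d.\ structure of the expected diameter; for the lemma as stated both are fully adequate.
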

A proof is given in Appendix~\ref{SM:ProofOfCosetLemma}. The uniformity of the expected distance on cosets in the same orbit allows us to develop the following formula.

\begin{corollary}\label{corollary:orbits}
	Assume that~$\bF_2^n/\cX$ is partitioned to the orbits~$\cO_1,\ldots,\cO_t$, and pick~$\cC_i\in\cO_i$ arbitrarily for every~$i\in[t]$. Then, we have
	\begin{align*}
	\E_{h_1,h_2}d(h_1,h_2)&=\E_{h_1,h_2}\frac{|\cX|}{2^n}\sum_{i=1}^t |\cO_i|d_{\cC_i}(h_1,h_2)\\
	&=\frac{|\cX|}{2^n}\sum_{i=1}^t|\cO_i|\E_{h_1,h_2}d_{\cC_i}(h_1,h_2).
	\end{align*}
	Namely, in order to compute~$\E_{h_1,h_2}d(h_1,h_2)$, it suffices to compute the expected distance when restricted to \emph{orbit representatives} from the orbits of~$\cX$.
\end{corollary}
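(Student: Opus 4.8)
The plan is to reduce~$d(h_1,h_2)$ to a sum of $\cC$-restricted distances over \emph{all} cosets of~$\cX$, and only afterwards to use the Coset Lemma (Lemma~\ref{lemma:coset}) to replace each coset by its orbit representative. First I would note that, since~$\cX$ is an affine subspace of~$\bF_2^n$, its cosets all have cardinality~$|\cX|$ and partition~$\{\pm1\}^n=\bF_2^n$. Splitting the sum in the definition of~$d(h_1,h_2)$ according to this partition, and recognizing the inner sum over a fixed coset~$\cC$ as~$|\cX|\cdot d_\cC(h_1,h_2)$ by~\eqref{equation:DefinitionCrest}, gives the pointwise identity
\begin{align*}
d(h_1,h_2)=\frac{|\cX|}{2^n}\sum_{\cC\in\bF_2^n/\cX}d_\cC(h_1,h_2).
\end{align*}

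Next I would group the cosets by orbit. As~$\cO_1,\ldots,\cO_t$ partition~$\bF_2^n/\cX$, taking expectations and using linearity reduces the problem to evaluating~$\E_{h_1,h_2}d_\cC(h_1,h_2)$ for each~$\cC\in\cO_i$. For any such~$\cC$ there is, by definition of an orbit, some~$\sigma\in\stab(\cX)$ with~$\sigma(\cC_i)=\cC$, so the Coset Lemma gives~$\E_{h_1,h_2}d_\cC(h_1,h_2)=\E_{h_1,h_2}d_{\cC_i}(h_1,h_2)$. Hence the inner sum over each orbit collapses, and I obtain
\begin{align*}
\E_{h_1,h_2}d(h_1,h_2)=\frac{|\cX|}{2^n}\sum_{i=1}^t|\cO_i|\,\E_{h_1,h_2}d_{\cC_i}(h_1,h_2),
\end{align*}
which is the equality between the outermost expressions of the corollary. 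The remaining (middle) expression, in which~$\E_{h_1,h_2}$ wraps the entire orbit-representative sum, differs from the right-hand side only by the placement of the constants~$\frac{|\cX|}{2^n}|\cO_i|$, and hence equals it by linearity of expectation.

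I do not anticipate a genuine obstacle, since the Coset Lemma carries all of the analytic weight and the rest is bookkeeping; the one point that must be handled with care is that the orbit-representative formula is valid only \emph{after} averaging over~$\cH$. Pointwise in~$(h_1,h_2)$ the decomposition holds for every coset, but replacing a coset by its orbit representative changes the summand and is justified only in expectation, where the symmetry of the distribution (guaranteed for~$\cH_{uni}$ and~$\cH_{vol}$ through Lemma~\ref{lemma:coset}) can be exploited. This is precisely why the corollary is stated as a chain of equalities between expectations rather than as an identity of the distances themselves, and the argument must retain~$\E_{h_1,h_2}$ throughout.
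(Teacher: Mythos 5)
Your proposal is correct and follows exactly the route the paper intends: decompose the global distance pointwise over the cosets of~$\cX$ (which partition~$\bF_2^n$ into sets of size~$|\cX|$), take expectations, and collapse each orbit to a representative via Lemma~\ref{lemma:coset}. Your closing remark that the replacement of a coset by its orbit representative is justified only after averaging over~$\cH$ is also the right reading of why the middle expression is stated inside~$\E_{h_1,h_2}$.
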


Of course, utilizing Corollary~\ref{corollary:orbits} for efficient computation of~$\bE_{h_1,h_2}d(h_1,h_2)$ strongly depends on the structure of~$\cX$, and the size of the respective orbits. In what follows we provide an example for a structure for which Corollary~\ref{corollary:orbits} is particularly powerful.

For~$\boldv\in\bF_2^n$ and~$I\subseteq[n]$, the set~$\cX$ is called a~$(\boldv,I)$-subcube (subcube, in short), if~$\cX=\spn_{\bF_2}\{\bolde_i\}_{i\in I}\oplus  \boldv$, where~$\bolde_i$ is the~$i$'th unit vector (i.e., $e_{i,j}=-1$ if~$i=j$, and~$1$ otherwise). It is readily verified that~$\cX$ is an affine subspace of~$\bF_2^n$ of dimension~$|I|$. 
The following results are proved in Appendix~\ref{SM:subcube}.

\begin{lemma}\label{lemma:standardXorbits}
	If~$\cX$ is a~$(\boldv,I)$-subcube for some~$I=\{i_j\}_{j=1}^\ell$ and~$\boldv\in\bF_2^n$, then~$\cX$ has~$n-|I|$ orbits, and a set of representatives is given by~$\cC_i=\cX\oplus \boldu_i$, where~$\boldu_i$ is any vector whose Hamming weight\footnote{The Hamming weight of~$u$ on~$[n]\setminus I$ is the size of the set~$\{ j\in[n]\setminus I~\vert~ u_j=-1 \}$.}
	on~$[n]\setminus I$ is~$i$.
\end{lemma}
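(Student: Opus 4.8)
The plan is to compute the stabilizer $\stab(\cX)$ explicitly and then read off its action on the cosets $\bF_2^n/\cX$. Recall first that $\spn_{\bF_2}\{\bolde_i\}_{i\in I}$ consists of exactly those vectors that are arbitrary on the coordinates in $I$ and equal to the additive identity (i.e.\ $1$) on $[n]\setminus I$; since $\oplus\boldv$ is pointwise multiplication by $\boldv$, the set $\cX$ consists precisely of the $\boldx\in\bF_2^n$ with $x_j=v_j$ for all $j\notin I$ and $x_i$ unconstrained for $i\in I$. In particular two offsets give the same coset $\cX\oplus\boldu$ iff they agree on $[n]\setminus I$, so cosets are parametrized by their fixed values on $[n]\setminus I$.

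First I would determine $\stab(\cX)$. Writing $\sigma=(\pi,\boldw)$ so that $\sigma(\boldx)=\pi(\boldx)\oplus\boldw$, I impose $\sigma(\boldx)=\boldx$ for every $\boldx\in\cX$, which coordinatewise reads $x_{\pi(m)}\,w_m=x_m$. For $m\in I$ the value $x_m$ ranges freely and independently of the other free coordinates, so this identity can hold for all $\boldx\in\cX$ only if $\pi(m)=m$ and $w_m=1$. Hence $\pi$ fixes $I$ pointwise and therefore permutes $[n]\setminus I$; for $m\notin I$ both $x_m=v_m$ and $x_{\pi(m)}=v_{\pi(m)}$ are fixed, forcing $w_m=v_m v_{\pi(m)}$, which is always realizable. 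Thus $\stab(\cX)\cong S_{[n]\setminus I}$, with the translation part of each element completely determined by its permutation and by $\boldv$.

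Second, I would compute the induced action on a coset. For $\boldx\in\cX$ one has $\sigma(\boldx\oplus\boldu)=\pi(\boldx)\oplus\pi(\boldu)\oplus\boldw$, and substituting $\pi(\boldx)=\boldx\oplus\boldw$ (which follows from $\sigma$ fixing $\cX$ pointwise) makes the two copies of $\boldw$ cancel, giving $\sigma(\boldx\oplus\boldu)=\boldx\oplus\pi(\boldu)$. Therefore $\sigma(\cX\oplus\boldu)=\cX\oplus\pi(\boldu)$: the translation part plays no role, and the action on cosets is exactly $\pi$ permuting the coordinates. Since $\pi$ ranges over all permutations of $[n]\setminus I$, two cosets $\cX\oplus\boldu$ and $\cX\oplus\boldu'$ lie in the same orbit iff their defining vectors on $[n]\setminus I$ are permutations of one another, i.e.\ iff they have equal Hamming weight on $[n]\setminus I$. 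As this weight is a complete invariant of the relation $\sim$ and ranges over $\{0,1,\dots,n-|I|\}$, the orbits are indexed by weight, with representatives exactly the $\cC_i=\cX\oplus\boldu_i$ where $\boldu_i$ has weight $i$ on $[n]\setminus I$, as claimed.

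I expect the stabilizer computation of the first step to be the main obstacle: one must argue carefully that the freeness of the $I$-coordinates forces $\pi$ to fix $I$ \emph{pointwise} and not merely to preserve $I$ as a set, since a naive reading would permit $\pi$ to permute within $I$. The crucial point is that the free coordinates vary independently, so the constraint $x_{\pi(m)}w_m=x_m$ cannot be satisfied for all $\boldx\in\cX$ unless $\pi(m)=m$; the remaining verifications (solving for $\boldw$ and checking the cancellation in $\sigma(\cX\oplus\boldu)=\cX\oplus\pi(\boldu)$) are routine.
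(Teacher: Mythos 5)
Your proof is correct and follows essentially the same route as the paper's: both first determine that $\stab(\cX)$ consists of the maps $(\pi,\pi(\boldv)\oplus\boldv)$ with $\pi$ fixing $I$ pointwise, and then show that the induced action on $\bF_2^n/\cX$ is classified by the Hamming weight of the shift on $[n]\setminus I$. Your single identity $\sigma(\cX\oplus\boldu)=\cX\oplus\pi(\boldu)$ packages a bit more cleanly what the paper does as two separate element-level computations (equal weight implies same orbit; unequal weight implies different orbits), and, like the paper, you inherit the statement's off-by-one: the weight ranges over $\{0,\dots,n-|I|\}$, so there are really $n-|I|+1$ orbits, the weight-$0$ one being $\cX$ itself (harmless downstream, since $d_{\cX}\equiv 0$ on $\cH$).
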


\begin{corollary}\label{corollary:standardXformula}
	If~$\cX$ is a~$(\boldv,I)$-subcube for some~$I\subseteq [n]$ and~$\boldv\in\bF_2^n$, then
	\begin{align*}
	\E_{h_1,h_2}d(h_1,h_2)=2^{|I|-n}\sum_{i=1}^{n-|I|}\binom{n-|I|}{i}\E_{h_1,h_2}d_{\cC_i}(h_1,h_2).
	\end{align*}
\end{corollary}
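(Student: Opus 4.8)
The plan is to specialize the general orbit formula of Corollary~\ref{corollary:orbits} to the subcube case, using the explicit description of the orbits supplied by Lemma~\ref{lemma:standardXorbits}. Corollary~\ref{corollary:orbits} already reduces $\E_{h_1,h_2}d(h_1,h_2)$ to a weighted sum of restricted expected distances over orbit representatives, with weights $\frac{|\cX|}{2^n}|\cO_i|$, so all that remains is to evaluate the three quantities $|\cX|$, the indexing of the orbits, and the orbit sizes $|\cO_i|$ for a $(\boldv,I)$-subcube.

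First I would record that $\cX = \spn_{\bF_2}\{\bolde_i\}_{i\in I}\oplus\boldv$ is an affine subspace of dimension $|I|$, so $|\cX| = 2^{|I|}$ and $\frac{|\cX|}{2^n} = 2^{|I|-n}$, which already produces the leading factor in the claimed formula. Next, Lemma~\ref{lemma:standardXorbits} identifies the orbits of the coset action: they are indexed by the Hamming weight $i$ on $[n]\setminus I$, with representative $\cC_i = \cX\oplus\boldu_i$. Since a coset $\cX\oplus\boldu$ is determined by the restriction $\boldu|_{[n]\setminus I}$, the orbit $\cO_i$ consists precisely of those cosets whose offset has weight $i$ on the $n-|I|$ coordinates outside $I$, and counting the ways to place $i$ sign flips among these coordinates gives $|\cO_i| = \binom{n-|I|}{i}$. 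Substituting into Corollary~\ref{corollary:orbits} then yields $\E_{h_1,h_2}d(h_1,h_2) = 2^{|I|-n}\sum_{i=0}^{n-|I|}\binom{n-|I|}{i}\E_{h_1,h_2}d_{\cC_i}(h_1,h_2)$.

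The last step is to dispose of the $i=0$ term, which is the only place the consistency of $\cH$ enters. For $i=0$ the representative is $\cC_0 = \cX$ itself, and since $\cX$ is exactly the set of labelled data points, every consistent hypothesis agrees with the prescribed label on each $\boldx\in\cX$. Hence for any $h_1,h_2\in\cH$ and $\boldx\in\cX$ we have $h_1(\boldx)h_2(\boldx) = 1$, so $d_{\cX}(h_1,h_2) = 0$ by the definition in~\eqref{equation:DefinitionCrest}, and therefore $\E_{h_1,h_2}d_{\cC_0}(h_1,h_2) = 0$. Dropping this vanishing term converts the sum over $i=0,\ldots,n-|I|$ into the sum over $i=1,\ldots,n-|I|$ stated in the corollary.

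I do not anticipate a serious obstacle here, as the result is essentially a bookkeeping specialization of Corollary~\ref{corollary:orbits}; the one genuinely substantive ingredient is the orbit-size count $|\cO_i| = \binom{n-|I|}{i}$, which rests on Lemma~\ref{lemma:standardXorbits}'s identification of orbits with Hamming-weight classes on $[n]\setminus I$. The point that requires care is the trivial orbit: it is a legitimate orbit of the coset action and should be included in the count, but its restricted distance vanishes identically by consistency, which is precisely why the summation in the final formula may start at $i=1$.
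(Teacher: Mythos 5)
Your proof is correct and follows essentially the same route the paper intends: substitute $|\cX|=2^{|I|}$ and the orbit sizes $|\cO_i|=\binom{n-|I|}{i}$ from Lemma~\ref{lemma:standardXorbits} into Corollary~\ref{corollary:orbits}. Your explicit justification for discarding the weight-zero orbit (that $d_{\cX}(h_1,h_2)=0$ for all consistent $h_1,h_2$) is a welcome addition, since it cleanly resolves the apparent off-by-one between the $n-|I|+1$ Hamming-weight classes on $[n]\setminus I$ and the sum starting at $i=1$ in the stated formula.
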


A particularly attractive property of Corollary~\ref{corollary:standardXformula} is that the significant contribution to~$\E_{h_1,h_2}d(h_1,h_2)$ comes from $O(\sqrt{n-|I|})$ of indices~$i\in[n-|I|]$ (See Appendix~\ref{SM:binomialConcentration}). Hence, for example, the contribution of every randomly chosen pair~$h_1,h_2$ to the expected diameter can be computed \textit{exactly} in~$O(nk(n-|I|))$ time, or approximated closely in~$O(nk\sqrt{n-|I|})$ time.


\begin{figure*}
	\centering
	\begin{subfigure}[b]{0.3\textwidth}
		\centering
		\includegraphics[width=\textwidth]{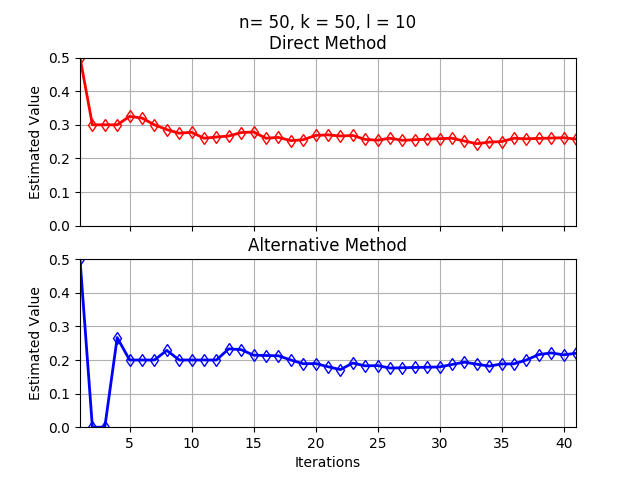}
		\caption{}
		\label{figure:smalll}
	\end{subfigure}
	\hfill
	\begin{subfigure}[b]{0.3\textwidth}
		\centering
		\includegraphics[width=\textwidth]{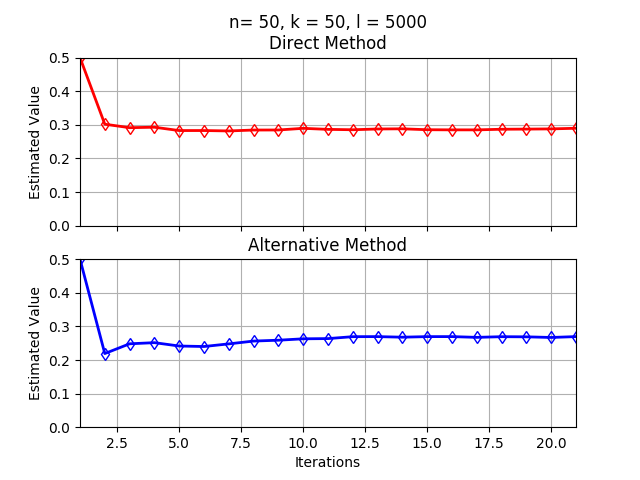}
		\caption{}
		\label{figure:largel}
	\end{subfigure}
	\hfill
	\begin{subfigure}[b]{0.3\textwidth}
		\centering
		\includegraphics[width=\textwidth]{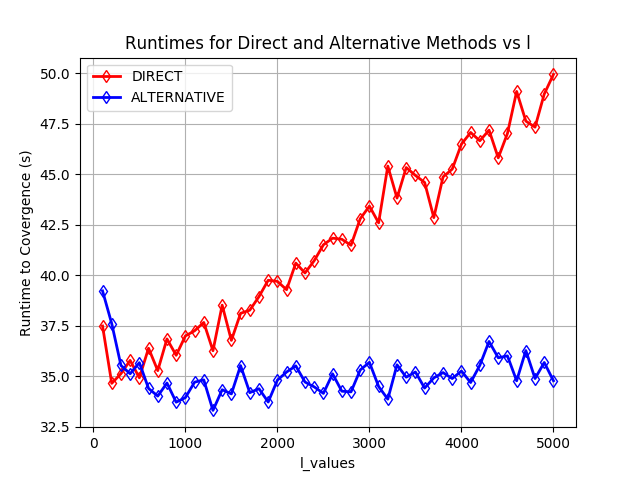}
		\caption{}
		\label{figure:runtimes}
	\end{subfigure}
	\caption{Runtime comparison and convergence plots for \textbf{DIR} and \textbf{ALT} (Section~\ref{section:volume}).}
	\label{figure:DIRvsALT}
\end{figure*}
\fi

\ifISIT
\subsection{The Case of Data Over~$\bR$}\label{section:realData}
In the case where~$\boldx_i\in\bR^n$ rather than~$\boldx_i\in\{\pm 1\}^n$, the definitions of~$\cH$ and~$\cV$ extend verbatim. However, the definition of distance reduces to that of an \textit{angle} between hyperplanes, and hence
\begin{align*}
d(h_1,h_2)=\frac{1}{\pi}\cdot\arccos\left({\frac{\boldw_1\cdot\boldw_2}{\norm{\boldw_1}\cdot\norm{\boldw_2}}}\right).
\end{align*}
where~$\boldw_1,\boldw_2\in\bR^n$ are vectors that define~$h_1,h_2$. Therefore, assuming the distribution~$\cH_{vol}$ ($\cH_{uni}$ is not well-defined in this case), one can estimate the expected diameter by the simple algorithm the averages the above expression over~$t$ random pairs from~$\cV$, i.e.,
\begin{align*}
\frac{1}{t}\sum_{\ell=1}^t \arccos\left( {\frac{\boldw_{i_\ell}\cdot\boldw_{j_\ell}}{\norm{\boldw_{i_\ell}}\cdot\norm{\boldw_{j_\ell}}}} \right).
\end{align*}
where~$\{\boldw_{i_\ell}\}_{\ell=1}^t$ and~$\{\boldw_{j_\ell}\}_{\ell=1}^t$ are chosen uniformly at random by H\&R. Notice that this algorithm can be used in the case of Boolean halfspaces as well (i.e., where~$\boldx_i\in\{\pm 1\}^n$), but the above distance measure does not reflect the \textit{Boolean} disagreement between the halfspaces, since it is not clear how many hypercube points lie in the intersection of two halfspaces.
\else
\section{The Case of Data Over~$\bR$}\label{section:realData}
Consider the case where~$\boldx_i\in\bR^n$ rather than~$\boldx_i\in\{\pm 1\}^n$. While the definitions of~$\cH$ and~$\cV$ extend verbatim to this case, one must revise the definition of distance. Aiming to reflect the fraction of disagreement, we define
\begin{align*}
d(h_1,h_2)=\frac{1}{\vol(\bB_n)}\int_{\bB_n}\frac{1-h_1(x)h_2(x)}{2}dx,
\end{align*}
where~$\bB_n$ is the~$n$-dimensional unit ball. However, one can easily notice that this definition is equivalent to the definition of \textit{angle}. Therefore, one can settle for
\begin{align*}
d(h_1,h_2)=\frac{1}{\pi}\cdot\arccos\left({\frac{\boldw_1\cdot\boldw_2}{\norm{\boldw_1}\cdot\norm{\boldw_2}}}\right),
\end{align*}
where~$\boldw_1,\boldw_2\in\bR^n$ are vectors that define~$h_1,h_2$. Hence, assuming the distribution~$\cH_{vol}$ on~$\cH$ ($\cH_{uni}$ is not well-defined in this case), one can estimate the average distance by the simple algorithm the averages the above expression over~$t$ random pairs from~$\cV$, i.e.,
\begin{align*}
\frac{1}{t}\sum_{\ell=1}^t \arccos\left( {\frac{\boldw_{i_\ell}\cdot\boldw_{j_\ell}}{\norm{\boldw_{i_\ell}}\cdot\norm{\boldw_{j_\ell}}}} \right),
\end{align*}
where~$\{\boldw_{i_\ell}\}_{\ell=1}^t$ and~$\{\boldw_{j_\ell}\}_{\ell=1}^t$ are chosen uniformly at random by H\&R. Notice that this algorithm can be used in the case of Boolean halfspaces as well (i.e., where~$\boldx_i\in\{\pm 1\}^n$), but the above distance measure does not reflect the \textit{Boolean} disagreement between the halfspaces, since it is not clear how many hypercube points lie in the intersection of two halfspaces.
\fi

\ifISIT\else
\section{Experimental Results}\label{section:experimental}
We ran our experiments on an Intel Core $i5$-$4570$, $3.20$GHz $4\times 4$ with $3.8$GiB RAM memory and ubuntu: $16.04$ LTS operating system. We used~$10^{5}$ iterations of H\&R as a warm-up. Afterwards, $500$ intermediate steps were made to generate consecutive samples. Both \textbf{DIR} and \textbf{ALT} were run until no more than~$5\cdot 10^{-2}$ additive difference in the estimation was observed during $10$ iterations. Our experiments demonstrate the feasibility of some of our techniques, but are inconclusive as of which one among \textbf{DIR} and \textbf{ALT} is preferable. 

\paragraph*{Expected Diameter vs. Accuracy} In the experiment of Figure~\ref{figure:BRG}, 300 datasets of size~$k=20$ and dimension~$n=50$ were generated at random and labeled by a halfspace~$h$ with a standard Gaussian weight vector~$\boldw$. All points in the \textit{arbitrary} (\textbf{A}) datasets were generated at random from~$\text{Bern}(0.5)$. In the \textit{bad} (\textbf{B}) datasets, $k/2$ points~$\boldx_i$ were chosen by~$\text{Bern}(0.5)$, and then their negation~$-\boldx_i$ was added to the dataset (notice that $\sign(\boldw\cdot\boldx)=-\sign(\boldw\cdot(-\boldx))$ for every~$\boldx$, and hence having both~$\boldx$ and~$-\boldx$ does not contribute to the learner more than just having either).
In the \textit{good} (\textbf{G}) datasets, we applied a simple iterative algorithm to find~$k/2$ ``boundary'' pairs~$\boldx,\boldy\in\{\pm 1\}^n$, i.e., such that~$h(\boldx)\ne h(\boldy)$, and the Hamming distance between~$\boldx$ and~$\boldy$ is~$1$.
After generating these datasets, the algorithm \textbf{DIR} was applied until convergence. 

In Figure~\ref{figure:BRG2}, for each one of the $\boldA$, $\boldB$, and $\boldG$ datasets, we conducted the following experiment---First, the perceptron algorithm was applied, where the starting point and the order of the points is randomized. Then, a random consistent hypothesis is chosen with H\&R (the ``true'' function), and the distance between these two functions is estimated. It is evident that on average, the performance of perceptron is superior in datasets with lower expected diameter.

\paragraph*{Performance Comparison} Let~$l$ be the number of samples from~$\{\pm 1\}^n$ in each iteration of either \textbf{DIR} or \textbf{ALT}. We observed greater stability when increasing~$l$ in both algorithms (e.g., Figure~\ref{figure:smalll} vs. Figure~\ref{figure:largel}), but in \textbf{DIR} one has to pay a much greater penalty in terms of running time for increasing~$l$. This is apparent in Figure~\ref{figure:runtimes}, where the run-times are averaged over 20 independent arbitrary datasets (see above).
\fi



\bibliographystyle{plain}

\ifISIT\else
\appendices
\setcounter{section}{0}
\section{Omitted proofs}\label{SM:Omitted}
\begin{lemma} (Range of the expected diameter) For every dataset~$\cD$ and every probability distribution~$\cH$, $$\E_{h_1,h_2\in\cH}d(h_1,h_2)\le 0.5.$$
\end{lemma}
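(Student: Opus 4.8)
The plan is to reduce the statement to the nonnegativity of the expected inner product $\E_{h_1,h_2}\inp{h_1}{h_2}$, which the Basic Relations section already identifies as a perfect square in expectation. Recall that
\begin{align*}
\E_{h_1,h_2}d(h_1,h_2)=\frac{1-\E_{h_1,h_2}\inp{h_1}{h_2}}{2},
\end{align*}
so the desired inequality $\E_{h_1,h_2}d(h_1,h_2)\le \tfrac12$ is equivalent to the claim that $\E_{h_1,h_2}\inp{h_1}{h_2}\ge 0$.

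First I would establish this nonnegativity by exploiting that $h_1$ and $h_2$ are drawn independently from the same distribution $\cH$. Precisely as in the computation of~\eqref{equation:mainRelation}, swapping the order of the (finite) expectations and invoking independence gives
\begin{align*}
\E_{h_1,h_2}\inp{h_1}{h_2}=\E_\boldx\left(\E_h h(\boldx)\right)^2=\E_\boldx H(\boldx)^2\ge 0,
\end{align*}
since this is the expectation over a uniformly random $\boldx\in\{\pm1\}^n$ of the square $H(\boldx)^2$. Substituting back yields $\E_{h_1,h_2}d(h_1,h_2)=\frac{1-\E_\boldx H(\boldx)^2}{2}\le \tfrac12$, as required.

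There is no substantial obstacle here; the entire content lies in the factorization $\E_{h_1,h_2}[h_1(\boldx)h_2(\boldx)]=(\E_h h(\boldx))^2$, which turns the cross-correlation term into a perfect square and is valid exactly because the two hypotheses are i.i.d. The point worth emphasizing is that the bound is genuinely a statement about the \emph{average} over a random pair: for a fixed pair one may have $d(h_1,h_2)=1$ (for instance taking $h_2=-h_1$), so the inequality fails pointwise, and it is precisely the independence of the two draws that forces the mean disagreement below $\tfrac12$.
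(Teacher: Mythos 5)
Your proof is correct and is essentially identical to the paper's: both reduce the claim via $\E_{h_1,h_2}d(h_1,h_2)=\frac{1-\E_{h_1,h_2}\inp{h_1}{h_2}}{2}$ and then observe that $\E_{h_1,h_2}\inp{h_1}{h_2}=\E_\boldx H(\boldx)^2\ge 0$ using the independence of the two draws, exactly as in~\eqref{equation:mainRelation}. Your closing remark that the bound holds only in expectation (since $d(h_1,-h_1)=1$ pointwise) is a nice addition but not needed.
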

\begin{proof}
	According to Section~\ref{section:basic} we have that
	\begin{align*}
	\E_{h_1,h_2\in\cH}d(h_1,h_2)&=\frac{1-\E_{h_1,h_2}\inp{h_1}{h_2}}{2}\\
	&=\frac{1-\E_\boldx H(\boldx)^2}{2},
	\end{align*}
	and since~$\E_\boldx H(\boldx)^2$ is nonnegative, the claim follows.
\end{proof}

\begin{lemma}\label{lemma:rhoV} (The complexity of the chord function) Given~$\boldv\in\bR^n$ and~$\boldl\in\bR^n$, one can choose a random elements from~$\{\boldv+t\boldl\vert t\in\bR\}\cap \cV$ in~$O(nk)$ time.
\end{lemma}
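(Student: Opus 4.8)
The plan is to observe that the chord $\{\boldv + t\boldl \mid t\in\bR\}\cap \cV$ is the image, under the affine map $t\mapsto \boldv + t\boldl$, of a single interval of scalars $t$; hence sampling a uniform point from the chord reduces to computing that interval and then sampling $t$ uniformly from it. First I would rewrite each of the $k+1$ constraints defining $\cV$ as a condition on the scalar $t$, intersect the resulting one-dimensional feasible sets, and finally draw $t$.

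First, for each linear constraint $y_i(\boldw\cdot\boldx_i)\ge 0$ of $\cV$, substituting $\boldw = \boldv + t\boldl$ gives $a_i + t\,b_i \ge 0$, where $a_i \triangleq y_i(\boldv\cdot\boldx_i)$ and $b_i \triangleq y_i(\boldl\cdot\boldx_i)$. Each pair $(a_i,b_i)$ costs $O(n)$ to form, so obtaining all of them costs $O(nk)$. Depending on the sign of $b_i$, the inequality $a_i + t\,b_i \ge 0$ yields a lower bound $t \ge -a_i/b_i$ (if $b_i > 0$), an upper bound $t \le -a_i/b_i$ (if $b_i < 0$), or a $t$-independent feasibility test $a_i \ge 0$ (if $b_i = 0$). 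The conjunction of the $k$ linear constraints therefore defines an interval $[\ell_{\text{lin}}, u_{\text{lin}}]$ (possibly empty), obtained by taking the maximum of the lower bounds and the minimum of the upper bounds in $O(k)$ time, with an $O(1)$ feasibility check per degenerate constraint.

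Next, the norm constraint $\norm{\boldv + t\boldl}\le 1$ becomes the quadratic inequality $\norm{\boldl}^2\, t^2 + 2(\boldv\cdot\boldl)\,t + (\norm{\boldv}^2 - 1) \le 0$; forming its three coefficients costs $O(n)$, and solving it gives a second interval $[\ell_{\text{ball}}, u_{\text{ball}}]$ delimited by the roots of the quadratic (with the standard degenerate handling when $\boldl = \mathbf{0}$). Intersecting this with $[\ell_{\text{lin}}, u_{\text{lin}}]$ yields the feasible interval $[t_{\min}, t_{\max}]$ for $t$ in $O(1)$ further time. Since $\cV$ is convex, the intersection of the line with $\cV$ is convex, so this interval is exactly the preimage of the chord; drawing $u\sim\mathrm{Unif}[0,1]$, setting $t = t_{\min} + u(t_{\max}-t_{\min})$, and returning $\boldv + t\boldl$ (an $O(n)$ computation) produces a uniform sample from the chord. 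Summing all steps, the dominant cost is the $O(nk)$ needed for the $k$ inner products, which gives the claimed $O(nk)$ running time. The only mildly delicate point is the bookkeeping of the degenerate cases ($b_i = 0$ or $\boldl = \mathbf{0}$) and checking that the interval is nonempty, but none of these affects the asymptotic bound.
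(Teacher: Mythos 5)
Your proposal is correct and follows essentially the same route as the paper's proof: parametrize the chord by the scalar $t$, convert each of the $k$ linear constraints into a one-sided bound on $t$ in $O(n)$ time each, reduce the norm constraint to a quadratic in $t$, intersect the resulting intervals, and sample $t$ uniformly. Your handling of the degenerate cases and of the sign of $\boldl\cdot\boldx_i$ (rather than just the sign of $y_i$) determining the direction of each bound is in fact slightly more careful than the paper's own write-up.
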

\begin{proof}
	Given~$\boldv$ and~$\boldl$ in~$\bR^n$, we ought to find the values~$t_1$ and~$t_2$ that define the body
	\begin{align}\label{equation:rhoV}
	&\cV\cap \{\boldv+t\cdot \boldl~\vert~t\in\bR\}\nonumber\\
	&=\{ \boldv+t\cdot\boldl~\vert~t\in\bR,~ y_i((\boldv+t\cdot\boldl))\cdot \boldx_i \ge 0\nonumber\\
	&\hspace{40pt}\mbox{ for all }i\in[k],\mbox{ and } \norm{\boldv+t\cdot\boldl}\le 1 \}\nonumber\\
	&=\{ \boldv+t\cdot\boldl~\vert~t_1\le t\le t_2 \}.
	\end{align}
	In~$O(nk)$ time we can turn each of the~$k$ linear constrains in~\eqref{equation:rhoV} into either an upper or a lower bound on~$t$ (depending on whether~$y_i=1$ or~$y_i=-1$), and intersect them to obtain a bound of the form~$m_1\le t\le m_2$. Further, the $\ell_2$-norm constraint in~\eqref{equation:rhoV} can be turned to a quadratic inequality of the form~$a_2t^2+a_1t+a_0\ge 0$ in~$O(n)$ time, and then turned to to a bound of the form~$c_1\le t\le c_2$ in~$O(1)$ time by solving it (notice that it will not be of the form~``$t\le c_1$ or $c_2\le t$'' since~$\cV$ is convex). Then, we intersect the segments~$[m_1,m_2]$ and~$[c_1,c_2]$ to find~$t_1$ and~$t_2$.
\end{proof}

\section{Probabilistic analysis of DIR}\label{SM:DIR}
We analyze the relation between~$m$ and~$\ell$ and the guaranteed approximation. First, for every $t\in[m]$, by the Hoeffding inequality we have that
\begin{align}\label{equation:Prob-l}
\Pr\bigg( \E_\boldx (h_{i_t}(\boldx)h_{j_t}(\boldx))
\le \frac{1}{\ell}\sum_{j=1}^\ell h_{i_t}(\boldz_{t,j})h_{j_t}(\boldz_{t,j})+\mu \bigg)
\ge 1-e^{-\frac{\ell\mu^2}{2}}
\end{align}
for every~$\mu>0$, where the probability is over the random choice of $\boldz_{t,1},\ldots,\boldz_{t,\ell}$, and where~$h_{i_t}(\boldx)\triangleq \sign(\boldw_{i_t}\cdot\boldx)$ (resp.~$h_{j_t}$). Also by the Hoeffding inequality, we have that
\begin{align}\label{equation:Prob-m}
\Pr\bigg( \bE_{h_1,h_2}\inp{h_1}{h_2}
\le  \frac{1}{m} \sum_{t=1}^m \E_\boldx(h_{i_t}(\boldx)h_{j_t}(\boldx))   +\delta \bigg)
\ge 1-e^{-\frac{m\delta^2}{2}}
\end{align}
for every~$\delta>0$. It is straightforward to show that if~\eqref{equation:Prob-l} holds for every~$t\in[m]$ and~\eqref{equation:Prob-m} holds, then 
\begin{align*}
\E_{h_1,h_2}\inp{h_1}{h_2}-\frac{1}{m\ell}\sum_{t=1}^m\sum_{j=1}^\ell h_{i_t}(\boldz_j)h_{j_t}(\boldz_j)\le \delta+\mu.
\end{align*}
Therefore, since~\eqref{equation:Prob-l} is true for any pair in~$\cH\times \cH$, by applying symmetric arguments to~\eqref{equation:Prob-l} and~\eqref{equation:Prob-m}, we have that
\begin{align*}
\Pr\left(\left| \frac{1}{m\ell}\sum_{j=1}^\ell\sum_{t=1}^m h_{i_t}(\boldz_j)h_{j_t}(\boldz_j)-\E_{h_1,h_2}\inp{h_1}{h_2}\right|  \le \epsilon  \right) 
&=\Pr\left(\left| \textbf{est}-\E_{h_1,h_2}\inp{h_1}{h_2}\right|  \le \epsilon  \right) \\
&\ge 2\left( 1-e^{-\frac{m\delta^2}{2}} \right) \left( 1-e^{-\frac{\ell(\epsilon-\delta)^2}{2}} \right)^m-1,
\end{align*}
where~$\epsilon=\delta+\mu$. 

\section{Learning a function of a Bernoulli variable}
\label{SM:Bernoulli}
In what follows, the samples~$x_1,\ldots,x_n$ correspond to the Bernoulli variables~$\1(\boldw\cdot\boldx\ge 0)$ that are mentioned in the description of \textbf{ALT}, and the parameter~$p$ equals~$\E_{\boldw\in\cV}\1(\boldw\cdot\boldx\ge 0)=\Pr_{\boldw\in\cV}(\boldw\cdot\boldx\ge0)$, where~$\boldx$ in any element of~$\{\pm1\}^n$.
\paragraph*{Problem} Given~$n$ i.i.d samples~$x_1,\ldots,x_n$ from~$Bern(p)$, find the best possible approximation to~$(2p-1)^2$. That is, for a given probability~$\mu$, find as small as possible~$\eta$ and a function~$f$ for which
	\begin{align*}
	\Pr\left( \left| f(x_1,\ldots,x_n)-(2p-1)^2 \right| \le \eta \right) \ge \mu.
	\end{align*}
	
	\paragraph*{Solution 1} $f(x_1,\ldots,x_n)=(\frac{2}{n}\sum_{i=1}^nx_i-1)^2$. Let~$\delta$ be such that~$\mu=1-2e^{-2n\delta^2}$. By Hoeffding's inequality we have that
	\begin{align*}
	\Pr\left( \left|\frac{1}{n}\sum_{i=1}^n x_i-p\right|\le \delta  \right)\ge \mu.
	\end{align*}
	Therefore, with probability~$\mu$ we have that
	\begin{align*}
	\bigg( \frac{2}{n}\sum_{i=1}^nx_i-1 \bigg)^2&=4\left( \frac{1}{n}\sum_{i=1}^n x_i \right)^2 -4\left( \frac{1}{n}\sum_{i=1}^n x_i \right)+1 \\
	&\le 4(p+\delta)^2-4(p-\delta)+1\\
	&=(2p-1)^2+(8p\delta+4\delta)+4\delta^2.
	\end{align*}
	Similarly, we have a lower bound of~$(2p-1)^2-(8p\delta+4\delta)+4\delta^2$, and thus, neglecting~$4\delta^2$, we have~$\eta=8p\delta+4\delta$.
	\paragraph*{Solution 2} $f(x_1,\ldots,x_n)=-4\left( \frac{2}{n}\sum_{j=1}^{n/2}(1-x_{1,j})x_{2,j} \right)+1$, where~$x_1,\ldots,x_n$ are indexed as $x_{1,1},\ldots,x_{1,n/2},$ $x_{2,1},\ldots,x_{2,n/2}$. It is readily seen that if~$x$ and~$y$ are chosen i.i.d from~$Bern(p)$ then $\E[y(1-x)]=p(1-p)$. Hence, by fixing~$\delta'$ such that~$\mu=1-2e^{-n\delta'^2}$, by Hoeffding's inequality we have that
	\begin{align*}
	\Pr\left( \left| \frac{1}{n/2}\sum_{j=1}^{n/2}(1-x_{1,j})x_{2,j}-p(1-p) \right|\le \delta' \right)\ge \mu.
	\end{align*}
	Therefore, with probability~$\mu$ we have that 
	\begin{align*}
	f(x_1,\ldots,x_n)&=-4\left( \frac{1}{n/2}\sum_{j=1}^{n/2}(1-x_{1,j})x_{2,j} \right)+1\\
	&\le -4(p(1-p)-\delta')+1\\
	&=(2p-1)^2+4\delta'.
	\end{align*}
	Similarly, we can guarantee a lower bound of~$(2p-1)^2-4\delta'$, and thus~$\eta=4\delta'$.

We are left to compare the confidence intervals. Since~$\mu=1-2e^{-2n\delta^2}=1-2e^{-n\delta'^2}$, it follows that~$\delta'=\sqrt{2}\cdot\delta$. Therefore, in Solution~2 we have~$\eta=4\delta'=4\sqrt{2}\cdot \delta\approx 5.65\delta$. It readily follows that $8p\delta+4\delta<5.65\delta$ for $p<\frac{1.65}{8}$. Hence, Solution~2 is a better estimation whenever $ p > \frac{1.65}{8}$. Since Solution~2 covers a broader range of~$p$ values we prefer it over Solution~1 in \textbf{ALT}.

\section{Probabilistic analysis of ALT}\label{SM:ALT}
In this analysis, we employ the abbreviated notations $P_\boldx\triangleq \Pr_{\boldw\in\cV}(\boldw\cdot\boldx\ge 0)$ and $\1_{j,i}\triangleq \1(\boldw_j\boldz_i\ge 0)$. First observe that for every~$\boldx\in\{\pm1\}^n$ we have that~$\E_{\boldw\in\cV} \1(\boldw\cdot\boldx\ge 0)=P_\boldx$. Hence, it readily follows that
\begin{align*}
\E_{\boldw_1,\boldw_2\in\cV}\left[(1-\1(\boldw_1\cdot\boldx\ge0))\1(\boldw_2\cdot\boldx\ge0)\right]=(1-P_\boldx)P_\boldx\mbox{ for every }\boldx\in\{\pm1\}^n,
\end{align*} 
where~$\boldw_1$ and~$\boldw_2$ are chosen independently and uniformly from~$\cV$. Therefore, by the Hoeffding inequality, for every~$\boldx\in\{\pm1\}^n$ we have that
\begin{align}\label{equation:(1-p)p}
\Pr\bigg( (1-P_\boldx)P_\boldx \le 
\frac{1}{s/2}\sum_{j=1}^{s/2} (1-\1(\boldw_{j}\boldx\ge 0))\1(\boldw_{j+s/2}\boldx\ge 0)+\mu \bigg)
&\ge 1-e^{-s\mu^2}
\end{align}
for every~$\mu>0$. That is, at most an~$e^{-s\mu^2}$ fraction of the~$s$-tuples in~$\cV^s$ are ``bad for~$\boldx$'', i.e., tuples for which the event in~\eqref{equation:(1-p)p} \textit{does not} occur. Therefore, since this claim is true for any~$\boldx\in\{\pm1\}^n$, it follows that given any~$\boldz_1,\ldots,\boldz_r$ in~$\{\pm1\}^n$, at most an~$r\cdot e^{-s\mu^2}$ fraction of~$\cV^s$ are bad for at least one~$\boldz_j$, and the rest of~$\cV^s$ are ``good'' for all~$\boldz_j$'s. Also by the Hoeffding inequality, we have
\begin{align*}
\Pr\left( \E_\boldx H(\boldx)^2 \ge \frac{1}{r}\sum_{i=1}^r (2P_{\boldz_i}-1)^2-\delta \right)\ge 1-e^{-\frac{r\delta^2}{2}}.
\end{align*}
for every~$\delta>0$. Now, notice that if:
\begin{enumerate}
	\item $(1-P_{\boldz_i})P_{\boldz_i} \le \frac{1}{s/2}\sum_{j=1}^{s/2} (1-\1_{j,i})\1_{j+s/2,i}+\mu$ for some~$\mu>0$ and every~$i\in[r]$; and
	\item $ \E_\boldx H(\boldx)^2 \ge \frac{1}{r}\sum_{i=1}^r (2P_{\boldz_i}-1)^2-\delta$ for some~$\delta>0$, then~\eqref{equation:alternativeApprox} satisfies:
\end{enumerate}
\begin{align*}
\frac{1}{r}\sum_{i=1}^r\left( -4\left(\frac{1}{s/2}\sum_{j=1}^{s/2}(1-\1_{j,i})\1_{j+s/2,i} \right)+1 \right)&\le \frac{1}{r}\sum_{i=1}^r\left( -4\left( (1-P_{\boldz_i})P_{\boldz_i}-\mu \right) +1 \right)\\
&= \frac{1}{r}\sum_{i=1}^r\left( (2P_{\boldz_i}-1)^2+4\mu \right)=\frac{1}{r}\sum_{i=1}^r
(2P_{\boldz_i}-1)^2+4\mu\\
&\le \E_{\boldx}H(\boldx)^2+\delta+4\mu.
\end{align*}
Hence, it follows that 
\begin{flalign*}
\Pr( \textbf{est} \le \E_\boldx H(\boldx)^2+\delta+4\mu)\ge \left(1-e^{-\frac{r\delta^2}{2}}  \right)\left( 1-r\cdot e^{-s\mu^2} \right),
\end{flalign*}
which by symmetry implies that
\begin{align*}
\Pr( | \textbf{est}-\E_\boldx H(\boldx)^2 | \le \delta+4\mu )
\ge 2\left(1-e^{-\frac{r\delta^2}{2}}  \right)\left( 1-r\cdot e^{-s\mu^2} \right)-1.
\end{align*}
\section{Proof of the Coset Lemma}\label{SM:ProofOfCosetLemma}
We begin with a quick sanity check.
\begin{lemma}\label{lemma:hinHS}
	If~$h$ is a halfspace and~$\sigma\in \aut(G)$ then~$h^\sigma$ is a halfspace as well.
\end{lemma}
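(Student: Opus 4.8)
The plan is to unwind the definition of $h^\sigma$ and show that the automorphism, which acts on the input, can be absorbed into a modification of the weight vector that defines the halfspace. Write $h(\boldx)=\sign(\boldw\cdot\boldx)$ for the weight vector $\boldw\in\bR^n$ guaranteed by the definition of a halfspace, and interpret $h^\sigma$ as precomposition, $h^\sigma(\boldx)=h(\sigma(\boldx))$, so that $h^\sigma(\boldx)=\sign(\boldw\cdot\sigma(\boldx))$. It then suffices to exhibit a single vector $\boldw'\in\bR^n$ with $\boldw\cdot\sigma(\boldx)=\boldw'\cdot\boldx$ for every $\boldx\in\{\pm1\}^n$, since then $h^\sigma(\boldx)=\sign(\boldw'\cdot\boldx)$ is manifestly a halfspace.

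To produce $\boldw'$, I would use the explicit description $\sigma=(\pi,\boldv)$, for which $\sigma(\boldx)=\pi(\boldx)\oplus\boldv$. The key elementary observation is that on the domain $\{\pm1\}^n$ the Boolean operation $\oplus$ coincides with the point-wise real product $\star$ (indeed $x\oplus y=-1$ exactly when $x\ne y$, which is the same condition as $xy=-1$), so that $\sigma(\boldx)=\pi(\boldx)\star\boldv$ already as a real vector. Substituting this in and expanding the inner product coordinate-wise gives $\boldw\cdot\sigma(\boldx)=\sum_{j}w_j v_j x_{\pi(j)}$.

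The final step is a reindexing of the sum by the permutation. Setting $i=\pi(j)$ rewrites the above as $\sum_i w_{\pi^{-1}(i)}v_{\pi^{-1}(i)}x_i$, so the desired vector is $\boldw'=\pi^{-1}(\boldw)\star\pi^{-1}(\boldv)$; comparing with the stated identity $\sigma^{-1}=(\pi^{-1},\pi^{-1}(\boldv))$ and the definition $\sigma^{-1}(\boldw)=\pi^{-1}(\boldw)\star\pi^{-1}(\boldv)$, this is exactly $\boldw'=\sigma^{-1}(\boldw)\in\bR^n$. Hence $h^\sigma(\boldx)=\sign(\sigma^{-1}(\boldw)\cdot\boldx)$ is a halfspace, as claimed.

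I do not anticipate a genuine obstacle here, as this is a direct computation; the only point requiring care is bookkeeping the direction of the permutation ($\pi$ versus $\pi^{-1}$) in the reindexing, and confirming that the $\oplus$-to-$\star$ identification is valid precisely because the inputs lie in $\{\pm1\}^n$. Equivalently, one may phrase the entire argument by observing that $\sigma$ acts on $\bR^n$ as a signed permutation matrix, which is orthogonal, so that $\boldw\cdot\sigma(\boldx)=(\sigma^{-1}(\boldw))\cdot\boldx$ follows from $\sigma^{-1}=\sigma^{\top}$; this viewpoint also reconciles with the earlier remark that $\det\sigma=\pm1$ and makes the argument insensitive to whether $h^\sigma$ is taken as precomposition with $\sigma$ or with $\sigma^{-1}$, since both have the same signed-permutation form.
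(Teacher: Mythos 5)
Your proof is correct and follows essentially the same route as the paper's: write $h^\sigma(\boldx)=\sign(\boldw\cdot(\pi(\boldx)\oplus\boldv))$, use that $\oplus$ coincides with the point-wise real product on $\{\pm1\}^n$, and reindex by the permutation to obtain the new weight vector $\pi^{-1}(\boldw\star\boldv)=\pi^{-1}(\boldw)\star\pi^{-1}(\boldv)=\sigma^{-1}(\boldw)$. The closing remark identifying $\sigma$ with an orthogonal signed-permutation matrix is a clean restatement of the same computation, not a different argument.
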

\begin{proof}
	Let~$\boldw\in\bR^n$ be any vector that defines~$h$, and denote~$\sigma=(\pi,\boldv)$. We have that
	\begin{align*}
	h^\sigma(\boldx)&=\sign(\boldw\cdot \sigma(\boldx))=\sign(\boldw\cdot(\pi(\boldx)\oplus\boldv))\\
	&\overset{(a)}{=}\sign((\boldw\star \boldv)\cdot \pi(\boldx))\overset{(b)}{=}\sign(\pi^{-1}(\boldw\star \boldv)\cdot \boldx),
	\end{align*}
	where~$(a)$ holds since~$\oplus$ is equivalent to multiplication over~$\bR$, and~$(b)$ holds since
	\begin{align*}
	(\boldw\star \boldv)\cdot \pi(\boldx)&=\sum_{i=1}^n(\boldw\star \boldv)_ix_{\pi(i)}=\sum_{i=1}^n (\boldw\star \boldv)_{\pi^{-1}(i)}x_i\\
	&=\pi^{-1}(\boldw\star \boldv)\cdot \boldx.\tag*{\qedhere}
	\end{align*}
\end{proof}

To prove Lemma~\ref{lemma:coset}, we require the following auxiliary claim, which applies to both~$\cH_{uni}$ and~$\cH_{vol}$.
\begin{lemma}\label{lemma:aux}
	For~$\sigma\in\aut(G)$ such that~$\sigma(\boldx)=\boldx$ for all~$\boldx\in\cX$ we have 
	\begin{itemize}
		\item [$(a)$] $h^\sigma \in \cH$ for every~$h\in \cH$; and
		\item [$(b)$] $\Pr(h)=\Pr(h^\sigma)$ for every~$h\in\cH$.
	\end{itemize}
\end{lemma}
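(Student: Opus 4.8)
The plan is to handle the two parts separately: part $(a)$ follows quickly from the hypothesis that $\sigma$ fixes $\cX$ pointwise, while part $(b)$ splits into a trivial case ($\cH_{uni}$) and a volume-preservation argument (the substantive case $\cH_{vol}$).

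For part $(a)$, I would first invoke Lemma~\ref{lemma:hinHS} to conclude that $h^\sigma$ is a halfspace whenever $h$ is one, so the only thing left is to check that $h^\sigma$ classifies $\cD$ correctly. Since $\sigma(\boldx_i)=\boldx_i$ for every data point $\boldx_i\in\cX$ by hypothesis, we get $h^\sigma(\boldx_i)=h(\sigma(\boldx_i))=h(\boldx_i)=y_i$, and hence $h^\sigma\in\cH$.

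For part $(b)$ under $\cH_{uni}$ the claim is immediate: part $(a)$ guarantees $h^\sigma\in\cH$, and under the uniform distribution every consistent hypothesis carries probability $1/|\cH|$, so $\Pr(h)=\Pr(h^\sigma)$. The real content is the case $\cH_{vol}$, where I must show $\vol(\cV_h)=\vol(\cV_{h^\sigma})$. I would exhibit the linear operator $\sigma^{-1}$ on $\bR^n$ as a volume-preserving bijection $\cV_h\to\cV_{h^\sigma}$. The ingredients are: (i) the adjoint identity $\sigma(\boldw)\cdot\boldx=\boldw\cdot\sigma^{-1}(\boldx)$, i.e. that $\sigma$ is orthogonal (a signed permutation matrix), which I would verify directly from the coordinate description $\sigma=(\pi,\boldv)$; using it, one checks that if $\boldw$ defines $h$ then $\sigma^{-1}(\boldw)$ defines $h^\sigma$, since $\sign(\sigma^{-1}(\boldw)\cdot\boldx)=\sign(\boldw\cdot\sigma(\boldx))=h(\sigma(\boldx))=h^\sigma(\boldx)$ for all $\boldx$; (ii) $\sigma^{-1}(\boldw)\in\cV$ whenever $\boldw\in\cV$, because orthogonality preserves the norm constraint $\norm{\boldw}\le 1$, and because $\sigma^{-1}(\boldw)\cdot\boldx_i=\boldw\cdot\sigma(\boldx_i)=\boldw\cdot\boldx_i$ for $\boldx_i\in\cX$ preserves each linear constraint $y_i(\boldw\cdot\boldx_i)\ge 0$; and (iii) the fact, recalled in the preliminaries, that $\sigma$ has determinant $\pm 1$, so it preserves Lebesgue volume. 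Applying the same reasoning to the inverse operator $\sigma$, together with $(h^\sigma)^{\sigma^{-1}}=h$, shows the map is onto and hence a bijection, giving $\vol(\cV_h)=\vol(\cV_{h^\sigma})$ and therefore $\Pr(h)=\Pr(h^\sigma)$.

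The main obstacle is the bookkeeping in part $(b)$: deriving the adjoint identity $\sigma(\boldw)\cdot\boldx=\boldw\cdot\sigma^{-1}(\boldx)$ cleanly from $\sigma=(\pi,\boldv)$, and then carefully checking that $\sigma^{-1}$ carries the cell $\cV_h$ \emph{onto} $\cV_{h^\sigma}$ rather than onto some other cell of the version space. Once the orthogonality relation and the fixed-point property $\sigma|_{\cX}=\id$ are in hand, the norm constraint, the consistency constraints, and the volume preservation from $\det\sigma=\pm 1$ all follow routinely.
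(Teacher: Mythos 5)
Your proposal is correct and follows essentially the same route as the paper: part $(a)$ via Lemma~\ref{lemma:hinHS} plus the fixed-point property, and part $(b)$ for $\cH_{vol}$ by showing that the signed-permutation operator $\sigma^{-1}$ carries $\cV_h$ bijectively onto $\cV_{h^\sigma}$ while preserving volume --- the paper phrases this as the change of variables $\boldw=\sigma(\boldu)$ in $\int_{\cV}\1(\boldw,h)\,d\boldw$ together with $\sigma^{-1}(\cV)=\cV$ and $\1(\sigma(\boldu),h)=\1(\boldu,h^\sigma)$, which is the same bijection. A minor point in your favor: you correctly attribute the norm preservation $\norm{\sigma(\boldw)}=\norm{\boldw}$ to orthogonality of the signed permutation matrix, whereas determinant $\pm 1$ alone (the property the paper cites at that step) would not suffice.
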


\begin{proof}
	Due to Lemma~\ref{lemma:hinHS}, to prove~$(a)$ we are only left to show that~$h^\sigma(\boldx_i)=y_i$ for every~$i\in[k]$ and~$h\in\cH$. However, this is clear since~$h^\sigma(\boldx_i)=h(\sigma(\boldx_i))=h(\boldx_i)=y_i$.
	
	Part~$(b)$ is obvious for~$\cH_{uni}$. To prove~$(b)$ for~$\cH_{vol}$, let~$h\in\cH$, and notice that
	it suffices to show that~$\vol(\cV_h)=\vol(\cV_{h^\sigma})$. For~$\boldw\in\bR^n$ and~$h\in\cH$ let~$\1(\boldw,h)$ be the~$(0,1)$-indicator of the event ``$\sign(\boldw\cdot \boldx)=h(\boldx)$ for all~$\boldx\in\bF_2^n$'', i.e.,~$\1(\boldw,h)=1$ if and only if~$\boldw$ defines~$h$, and otherwise it is zero. Then, we have that
	\begin{align}\label{equation:integral1}
	\vol(\cV_h)=\int_{\cV}\1(\boldw,h)d\boldw.
	\end{align}
	We perform the variable substitution~$\boldw=\sigma(\boldu)$, and since~$\sigma$ is a linear operator whose determinant is either~$1$ or~$-1$, it follows that
	\begin{align}\label{equation:integral2}
	\eqref{equation:integral1}=
	\int_{\sigma^{-1}(\cV)}\1(\sigma(\boldu),h)d\boldu.
	\end{align}
	To show that~\eqref{equation:integral2} equals~$\vol(\cV_{h^{\sigma}})$, it suffices to show that~$\sigma^{-1}(\cV)=\cV$ and that $\1(\sigma(\boldu),h)=\1(\boldu,h^{\sigma})$ for every~$\boldu\in\bR^n$. To show the former, notice that
	\begin{align}\label{equation:sigmaminusone}
	\sigma^{-1}(\cV)&=\{ \sigma^{-1}(\boldw)\vert \boldw\in\cV \}=\{ \boldw\vert \sigma(\boldw)\in \cV \}\nonumber\\
	&=\{ \boldw\in \bR^n \vert y_i(\sigma(\boldw)\cdot \boldx_i)\ge 0\mbox{ for every }i\in[k]\mbox{ and }\norm{\sigma(\boldw)}\le 1 \}.
	\end{align}
	Again, since~$\sigma$ is a linear transform of determinant~$\pm 1$, it follows that~$\norm{\sigma(\boldw)}=\norm{\boldw}$ for every~$\boldw\in\bR^n$. In addition, by denoting~$\sigma=(\pi,\boldv)$ we have that
	\begin{align*}
	\sigma(\boldw)\cdot \boldx_i&=(\pi(\boldw)\star \boldv)\cdot \boldx_i=\pi(\boldw)\cdot(\boldx_i\oplus \boldv)\\
	&=\sum_{j=1}^nw_{\pi(j)}(x_{i,j}\oplus v_j)\\&=\sum_{j=1}^nw_j(x_{i,\pi^{-1}(j)}\oplus v_{\pi^{-1}(j)})\\
	&\overset{(\dagger)}{=}\sum_{j=1}^nw_j (\sigma^{-1}(\boldx_i))_j=\boldw\cdot \sigma^{-1}(\boldx_i),
	\end{align*}
	where~$(\dagger)$ follows since~$\sigma^{-1}(\boldx)=\pi^{-1}(\boldx\oplus\boldv)$ for every~$\boldx\in\bF_2^n$. Therefore, it follows that
	\begin{align}\label{equation:sigmaminusone2}
	\eqref{equation:sigmaminusone}=\{ \boldw\in\bR^n \vert y_i(\boldw\cdot \sigma^{-1}(\boldx_i))\ge 0\mbox{ for every }i\in[k]
	\mbox{ and }\norm{\boldw}\le 1 \}.\qquad
	\end{align}
	Now, since~$\sigma(\boldx_i)=\boldx_i$, it follows that~$\sigma^{-1}(\boldx_i)=\boldx_i$, and hence~\eqref{equation:sigmaminusone2} implies that~$\sigma^{-1}(\cV)=\cV$.
	
	To prove that~$\1(\sigma(\boldu),h)=\1(\boldu,h^\sigma)$ for every~$\boldu\in\bR^n$, (i.e., that $\sigma(\boldu)$ defines~$h$ if and only if~$\boldu$ defines~$h^\sigma$) it is shown that for every~$\boldu\in\bR^n$, we have that $h(\sigma(\boldx))=\sign(\boldu\cdot \boldx)$ for every~$\boldx\in\bF_2^n$ if and only if $h(\boldx)=\sign(\sigma(\boldu)\cdot \boldx)$ for every~$\boldx\in\bF_2^n$. Let~$\boldu\in\bR^n$, and assume that~$h(\sigma(\boldx))=\sign(\boldu\cdot \boldx)$ for every~$\boldx\in\bF_2^n$. Then, (all subsequent expressions hold for every~$\boldx\in\bF_2^n$)
	\begin{align*}
	h(x_{\pi(1)}\oplus v_1,\ldots,x_{\pi(n)}\oplus v_n)&=\sign\left(\sum_{j=1}^nu_jx_j\right),
	\end{align*}
	which is equivalent to
	\begin{align*}
	h(x_{\pi(1)},\ldots,x_{\pi(n)})&=\sign\left(\sum_{j=1}^n u_j(x_j\oplus  v_{\pi^{-1}(j)})\right)\\
	&=\sign\left( \sum_{j=1}^n (u_jv_{\pi^{-1}(j)})\cdot x_j\right)\\
	&= \sign \left( \sum_{j=1}^n (u_{\pi(j)}v_j)\cdot x_{\pi(j)} \right).
	\end{align*}
	Now, by substituting~$x_{\pi(i)}$ with~$x_i$, we get
	\begin{align*}
	h(\boldx)&=h(x_1,\ldots,x_n)=\sign\left( \sum_{j=1}^n (u_{\pi(j)}v_j)\cdot x_j \right)\\
	&=\sign\left( (\pi(\boldu)\star \boldv)\cdot \boldx \right)=\sign(\sigma(\boldu)\cdot \boldx),
	\end{align*}
	and hence~$\sigma(\boldu)$ defines~$h$. The converse is proved by iterating identical steps in a reversed order. Therefore, we have that
	\begin{align*}
	\int_{\sigma^{-1}(\cV)}\1(\sigma(\boldu),h)d\boldu=\int_{\cV}\1(\boldu,h^\sigma)d\boldu=\vol(\cV_{h^\sigma}),
	\end{align*}
	and hence~$\Pr(h)=\Pr(h^\sigma)$ in~$\cH_{vol}$ as well.
\end{proof}

\begin{proof} (of Lemma~\ref{lemma:coset})
	Since~$|\cC_1|=|\cC_2|$, it follows that for every~$h_1,h_2\in\cH$, we have that
	\begin{align*}
	d_{\cC_1}(h_1^\sigma,h_2^\sigma)&=\frac{1}{|\cC_1|}\sum_{\boldc\in\cC_1}\frac{1-h_1^\sigma(\boldc)h_2^\sigma(\boldc)}{2}\\
	&=\frac{1}{|\cC_2|}\sum_{\boldc\in\cC_2}\frac{1-h_1^\sigma(\sigma^{-1}(\boldc))h_2^\sigma(\sigma^{-1}(\boldc))}{2}\\
	&=\frac{1}{|\cC_2|}\sum_{\boldc\in\cC_2}\frac{1-h_1(\boldc)h_2(\boldc)}{2}=d_{\cC_2}(h_1,h_2).
	\end{align*}
	Hence, since~$h_1^\sigma,h_2^\sigma\in\cH$ by Lemma~\ref{lemma:aux}$(a)$, it follows that for every pair of functions~$h_1,h_2\in\cH$ there exists a respective pair of functions~$h_1^\sigma,h_2^\sigma\in\cH$ such that~$d_{\cC_1}(h_1^\sigma,h_2^\sigma)=d_{\cC_2}(h_1,h_2)$. Moreover, it follows from Lemma~\ref{lemma:aux}$(b)$ that
	\begin{align*}
	\E_{h_1,h_2}d_{\cC_2}(h_1,h_2)&=\sum_{h_1,h_2\in\cH}\Pr(h_1)\Pr(h_2)d_{\cC_2}(h_1,h_2)\\
	&=\sum_{h_1,h_2\in\cH}\Pr(h_1^{\sigma})\Pr(h_2^\sigma)d_{\cC_1}(h_1^\sigma,h_2^\sigma),
	\end{align*}
	and since the mapping~$h\mapsto h^\sigma$ is an injective map from~$\cH$ to itself, we have
	\begin{align*}
	\sum_{h_1,h_2\in\cH}\Pr(h_1^{\sigma})\Pr(h_2^\sigma)d_{\cC_1}(h_1^\sigma,h_2^\sigma)&=	\sum_{h_1,h_2\in\cH}\Pr(h_1)\Pr(h_2)d_{\cC_1}(h_1,h_2)\\
	&=\E_{h_1,h_2}d_{\cC_1}(h_1,h_2),
	\end{align*}
	which concludes the proof.
\end{proof}

\section{Subcube lemmas}\label{SM:subcube}
\begin{lemma}
	If~$\cX$ is a~$(\boldv,I)$-subcube for some~$I=\{i_j\}_{j=1}^\ell$ and~$\boldv\in\bF_2^n$, then $\stab(\cX)=\{\sigma=(\pi,\pi(\boldv)\oplus  \boldv)\vert \pi\in S_{I} \}$, where~$S_I$ is the set of all permutations~$\pi$ in~$S_n$ such that~$\pi(i)=i$ for every~$i\in I$.
\end{lemma}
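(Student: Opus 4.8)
The plan is to characterize directly which automorphisms $\sigma=(\pi,\boldu)\in\aut(G)$ fix $\cX$ pointwise, exploiting two facts: that $\cX$ is the affine translate $W\oplus\boldv$ of the \emph{linear} subspace $W\triangleq\spn_{\bF_2}\{\bolde_i\}_{i\in I}$, and that the permutation part $\pi$ acts $\bF_2$-linearly on $\bF_2^n$ (it merely permutes coordinates). To avoid a clash with the $\boldv$ defining the subcube, I write the translation part as $\boldu$, so $\sigma(\boldx)=\pi(\boldx)\oplus\boldu$.

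First I would unwind the fixed-point condition. Since $\oplus$ is self-inverse, requiring $\sigma(\boldx)=\boldx$ for every $\boldx\in\cX$ is equivalent to $\pi(\boldx)\oplus\boldx=\boldu$ for every $\boldx\in\cX$. Writing a generic element of $\cX$ as $\boldx=\boldz\oplus\boldv$ with $\boldz\in W$, and using linearity of $\pi$ to expand $\pi(\boldz\oplus\boldv)=\pi(\boldz)\oplus\pi(\boldv)$, this becomes
\begin{align*}
\pi(\boldz)\oplus\boldz=\boldu\oplus\pi(\boldv)\oplus\boldv\qquad\text{for every }\boldz\in W.
\end{align*}
The right-hand side does not depend on $\boldz$. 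Evaluating at $\boldz=\boldsymbol{0}$ (the all-$(+1)$ vector, which is the additive identity of $\bF_2^n$ and lies in $W$) makes the left-hand side $\boldsymbol{0}$, forcing $\boldu=\pi(\boldv)\oplus\boldv$ — exactly the translation part asserted in the lemma. Substituting this back, the remaining condition collapses to $\pi(\boldz)=\boldz$ for all $\boldz\in W$.

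It then remains to show that $\pi$ fixes $W$ pointwise if and only if $\pi\in S_I$. Because $\pi$ is linear and $W=\spn_{\bF_2}\{\bolde_i\}_{i\in I}$, fixing $W$ pointwise is equivalent to $\pi(\bolde_i)=\bolde_i$ for each $i\in I$. A one-line computation from $(\pi(\boldx))_j=x_{\pi(j)}$ gives $\pi(\bolde_i)=\bolde_{\pi^{-1}(i)}$, so $\pi(\bolde_i)=\bolde_i$ holds precisely when $\pi^{-1}(i)=i$, i.e.\ $\pi(i)=i$; ranging over $i\in I$ this is exactly $\pi\in S_I$. The forward inclusion of the lemma is then the derivation above, while the reverse inclusion is the easy verification that any $\sigma=(\pi,\pi(\boldv)\oplus\boldv)$ with $\pi\in S_I$ sends $\boldz\oplus\boldv\in\cX$ to $\pi(\boldz)\oplus\pi(\boldv)\oplus\pi(\boldv)\oplus\boldv=\boldz\oplus\boldv$.

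The only real bookkeeping difficulty is keeping the two algebraic structures straight: the additive $\bF_2$-structure on $\{\pm1\}^n$ (where $+1$ is the identity and $\oplus$ the group operation) versus the coordinate action of $\pi$, and in particular getting $\pi(\bolde_i)=\bolde_{\pi^{-1}(i)}$ rather than $\bolde_{\pi(i)}$. Beyond that, every step is a short manipulation, and I do not expect any genuinely hard obstacle.
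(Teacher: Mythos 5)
Your proof is correct, and while the easy inclusion (that every $(\pi,\pi(\boldv)\oplus\boldv)$ with $\pi\in S_I$ stabilizes $\cX$) is verified the same way as in the paper, you handle the harder inclusion by a genuinely different and cleaner route. The paper first pins down the permutation part: assuming $\pi\notin S_I$ it picks $i\in I$ and $j\neq i$ with $\pi(j)=i$ and splits into the cases $j\in I$ and $j\notin I$, exhibiting in each case a point of $\cX$ that $\sigma$ moves; only afterwards does it solve $\sigma(\boldx)=\boldx$ for the translation part to get $\boldu=\pi(\boldv)\oplus\boldv$. You reverse the order and linearize: rewriting the stabilizer condition as $\pi(\boldz)\oplus\boldz=\boldu\oplus\pi(\boldv)\oplus\boldv$ for all $\boldz\in W=\spn_{\bF_2}\{\bolde_i\}_{i\in I}$, evaluating at $\boldz=\boldsymbol{0}$ immediately forces the translation part, and what remains is exactly ``$\pi$ fixes $W$ pointwise,'' which by linearity reduces to $\pi(\bolde_i)=\bolde_{\pi^{-1}(i)}=\bolde_i$ for $i\in I$, i.e.\ $\pi\in S_I$. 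What your approach buys is the elimination of the coordinate-level case analysis (whose $j\notin I$ sub-case in the paper is the most delicate step, requiring a point of $\cX$ with $x_i\neq u_j$), replaced by a single evaluation at the identity element plus the standard fact that a linear map fixing a spanning set fixes the span; the paper's version, in exchange, never needs to invoke the affine/linear decomposition $\cX=W\oplus\boldv$ explicitly and works directly with generic elements $a_1\bolde_{i_1}\oplus\cdots\oplus a_\ell\bolde_{i_\ell}\oplus\boldv$. You also correctly flag the one real pitfall, $\pi(\bolde_i)=\bolde_{\pi^{-1}(i)}$ rather than $\bolde_{\pi(i)}$, which is consistent with the paper's convention $\sigma(\boldx)=(x_{\pi(1)},\ldots,x_{\pi(n)})\oplus\boldv$.
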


\begin{proof}
	Let~$\sigma=(\pi,\pi(\boldv)\oplus  \boldv)$ for~$\pi\in S_I$, and let~$\boldx=a_1 \bolde_{i_1}\oplus \ldots\oplus a_{\ell}\bolde_{i_\ell}\oplus \boldv\in \cX$ for some~$a_i$'s in~$\bF_2$. Then,
	\begin{align*}
	\sigma(\boldx)&=\pi(\boldx)\oplus  \pi(\boldv)\oplus \boldv\\
	&=\pi(a_1\bolde_{i_1}\oplus \ldots\oplus a_{\ell}\bolde_{i_\ell}\oplus \boldv)\oplus  \pi(\boldv)\oplus \boldv\\
	&\overset{(\dagger)}{=} a_1\bolde_{i_1}\oplus \ldots\oplus a_{\ell}\bolde_{i_\ell}\oplus \pi(\boldv)\oplus \pi(\boldv)\oplus \boldv\\
	&= a_1\bolde_{i_1}\oplus \ldots\oplus a_{\ell}\bolde_{i_\ell}\oplus \boldv=\boldx,
	\end{align*}
	where~$(\dagger)$ follows since~$\pi$ is a linear transform and since~$\pi(\bolde_{i})=\bolde_i$ for every~$i\in I$. Therefore, it follows that~$\{(\pi,\pi(\boldv)\oplus \boldv)\vert \pi\in S_{I} \}\subseteq \stab(\cX)$.
	
	Conversely, let~$\sigma=(\pi,\boldu)\in\stab(\cX)$. If~$\pi\notin S_I$ then there exists~$i\in I$ and~$j\ne i$ such that~$\pi(j)=i$. If~$j\in I$ then any~$\boldx\in\cX$ such that~$x_i\ne x_j$ is not mapped to itself by~$\sigma$. If~$j\notin I$ then any~$\boldx\in\cX$ such that~$x_i\ne u_j$ is not mapped to itself. Therefore, it must be that~$\pi\in S_I$. Now let $\boldx=a_1 \bolde_{i_1}\oplus \ldots\oplus a_{\ell}\bolde_{i_\ell}\oplus \boldv\in \cX$ for some~$a_i$'s in~$\bF_2$. Since~$\sigma(\boldx)=\boldx$, it follows that
	\begin{align*}
	\pi(a_1\bolde_{i_1}\oplus \ldots a_\ell \bolde_{i_\ell}\oplus  \boldv)\oplus \boldu&=a_1\bolde_{i_1}\oplus \ldots \oplus a_\ell \bolde_{i_\ell}\oplus  \boldv,
	\end{align*}
	and
	\begin{align*}
	a_1\bolde_{i_1}\oplus \ldots \oplus a_\ell \bolde_{i_\ell}\oplus  \pi(\boldv)\oplus \boldu&=a_1\bolde_{i_1}\oplus \ldots a_\ell \bolde_{i_\ell}\oplus  \boldv,
	\end{align*}
	and therefore~$\boldu=\pi(\boldv)\oplus  \boldv$.
\end{proof}


\begin{proof} (of Lemma~\ref{lemma:standardXorbits})
	Let~$\boldu,\boldw\in\bF_2^n$ be two vectors with identical Hamming weight on~$[n]\setminus I$ and~$u_i=w_i=1$ for every~$i\in I$. Therefore, there exists a permutation~$\pi\in S_I$ such that~$\pi(\boldu)=\boldw$. For~$\cC_\boldu\triangleq\cX\oplus  \boldu$ and~$\cC_\boldw\triangleq\cX\oplus \boldw$ we show that~$\sigma(\cC_\boldu)=\cC_\boldw$, where~$\sigma=(\pi,\pi(\boldv)\oplus \boldv)$.
	
	For every~$\boldc\in\cC_\boldu$ there exist $a_1,\ldots,a_\ell\in\bF_2$ such that~$\boldc=a_1\bolde_{i_1}\oplus \ldots\oplus  a_\ell \bolde_{i_\ell}\oplus \boldv\oplus \boldu$. Therefore,
	\begin{align*}
	\sigma(\boldc)&=\pi(\boldc)\oplus  \pi(\boldv)\oplus \boldv\\
	&= a_1\bolde_{i_1}\oplus \ldots\oplus  a_\ell \bolde_{i_\ell}\oplus \pi(\boldv)\oplus \pi(\boldu)\oplus  \pi(\boldv)\oplus \boldv\\
	&= a_1\bolde_{i_1}\oplus \ldots\oplus  a_\ell \bolde_{i_\ell}\oplus \boldw\oplus \boldv\in\cX\oplus \boldw=\cC_\boldw,
	\end{align*}
	which readily implies that~$\sigma(\cC_\boldu)=\cC_\boldw$. Hence, it follows that any two cosets~$\cC_\boldu$ and~$\cC_\boldw$ such that~$\boldu$ and~$\boldw$ have identical Hamming weight on~$[n]\setminus I$ reside in the same orbit.
	
	We now prove that any~$\cC_\boldu$ and~$\cC_\boldw$ such that~$\boldu$ and~$\boldw$ differ in their Hamming weight on~$[n]\setminus I$ are in \textit{different} orbits. Assuming otherwise, we have some~$\sigma=(\pi,\pi(\boldv)\oplus  \boldv)\in\stab(\cX)$ such that~$\sigma(\cC_\boldu)=\cC_\boldw$, which implies that for any $\boldc=a_1\bolde_{i_1}\oplus \ldots\oplus  a_\ell \bolde_{i_\ell}\oplus \boldv\oplus \boldu\in\cC_\boldu$ we have~$\sigma(\boldc)\in\cC_\boldw$, i.e.,
	\begin{align*}
	\pi(\boldc)\oplus \pi(\boldv)\oplus \boldv&\in\cC_\boldw\\
	a_1\bolde_{i_1}\oplus \ldots\oplus  a_\ell \bolde_{i_\ell}\oplus  \pi(\boldv)\oplus \pi(\boldu)\oplus \pi(\boldv)\oplus \boldv&\in\cC_\boldw\\
	a_1\bolde_{i_1}\oplus \ldots\oplus  a_\ell \bolde_{i_\ell}\oplus \pi(\boldu)\oplus \boldv&\in\cC_\boldw.
	\end{align*}
	Now, since~$\boldv$, $\boldu$, and~$\boldw$ have no~$-1$ entries on~$I$, and since~$\pi\in S_I$, it follows that~$\pi(\boldu)\oplus  \boldv=\boldw\oplus  \boldv$, i.e., that~$\pi(\boldu)=\boldw$. However,~$\boldw$ and~$\boldu$ are of different Hamming weights, which is a contradiction.
	
	Hence, we have that the cosets of~$\cX$ are partitioned according to the weight of their shift vector. That is, there are~$n-|I|$ cosets~$\cO_1,\ldots,\cO_{n-|I|}$, and a coset~$\cX\oplus\boldu$ lies in~$\cO_{w_H(\boldu)}$, where~$w_H$ denotes Hamming weight.
\end{proof}

\section{Concentration of binomial coefficients}\label{SM:binomialConcentration}


\begin{lemma}\label{lemma:tailBoundsEst}
	Let~$B\triangleq\{ \frac{q}{2}-\floor{c\sqrt{q}},\ldots,\frac{q}{2}+\floor{c\sqrt{q}} \}$ for some constant~$c>0$, where~$q\triangleq n-|I|$. Then, for large enough~$q$ we have that
	\begin{align*}
	\sum_{r\in B}\frac{\binom{q}{r}}{2^q}\E_{h_1,h_2}d_{\cC_r}(h_1,h_2)\le \E_{h_1,h_2}d(h_1,h_2) \le  \sum_{r\in B}\frac{\binom{q}{r}}{2^q}\E_{h_1,h_2}d_{\cC_r}(h_1,h_2) +(1-C(2c)+C(-2c)),
	\end{align*}
	where~$C(x)=\frac{1}{2}\left( 1+\text{erf} \left( \frac{x}{\sqrt{2}} \right) \right) $ is the cumulative distribution function (CDF) of a standard normal random variable~$\cN(0,1)$.
\end{lemma}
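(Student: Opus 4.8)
The plan is to reduce the claim to a binomial tail estimate via the exact expansion of Corollary~\ref{corollary:standardXformula}. Writing $q\triangleq n-|I|$ and abbreviating $a_i\triangleq\E_{h_1,h_2}d_{\cC_i}(h_1,h_2)$, that corollary reads $\E_{h_1,h_2}d(h_1,h_2)=\frac{1}{2^q}\sum_i\binom{q}{i}a_i$, and the weights $\binom{q}{i}/2^q$ are precisely the point masses of a $\mathrm{Bin}(q,1/2)$ variable. Two elementary facts about $a_i$ will drive everything: $a_i\ge 0$, since restricted distances are nonnegative; and $a_i\le\tfrac12$, because the computation of Section~\ref{section:basic} gives $\E_{h_1,h_2}[h_1(\boldc)h_2(\boldc)]=H(\boldc)^2\ge 0$, so $a_i=\frac{1}{|\cC_i|}\sum_{\boldc\in\cC_i}\frac{1-H(\boldc)^2}{2}\le\tfrac12$.

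First I would dispose of the lower bound. For $q$ large enough the window $B$ lies strictly inside $\{1,\dots,q\}$ (since $1\le q/2-\floor{c\sqrt q}$ and $q/2+\floor{c\sqrt q}\le q$ once $c\le\sqrt q/2$), so the restricted sum $\sum_{r\in B}\frac{\binom{q}{r}}{2^q}a_r$ arises from $\E_{h_1,h_2}d(h_1,h_2)$ by deleting the terms with $i\notin B$; as these are nonnegative, the restricted sum is at most the full one, which is the left inequality.

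Next I would handle the upper bound. The deleted mass is $\frac{1}{2^q}\sum_{i\notin B}\binom{q}{i}a_i\le\frac12\cdot\frac{1}{2^q}\sum_{i\notin B}\binom{q}{i}$ by $a_i\le\tfrac12$, and the quantity $\frac{1}{2^q}\sum_{i\notin B}\binom{q}{i}=\Pr[\mathrm{Bin}(q,1/2)\notin B]$ is a two-sided binomial tail. Standardizing by the mean $q/2$ and standard deviation $\sqrt q/2$, the event $\{X\notin B\}$ becomes $\{|X-q/2|/(\sqrt q/2)>\floor{c\sqrt q}/(\sqrt q/2)\}$ with threshold tending to $2c$; the Central Limit Theorem~\cite{CLT} then yields $\Pr[\mathrm{Bin}(q,1/2)\notin B]\to\Pr[|\cN(0,1)|>2c]=2(1-C(2c))=1-C(2c)+C(-2c)$, using $C(-2c)=1-C(2c)$. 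Consequently $\frac12\Pr[\mathrm{Bin}(q,1/2)\notin B]$ tends to half the target, which is strictly below $1-C(2c)+C(-2c)$, so for all large enough $q$ the deleted mass is at most $1-C(2c)+C(-2c)$; adding it back to the restricted sum gives the right inequality.

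The hard part will be exactly this last conversion: the CLT supplies only a limiting value for the binomial tail, and that tail need not approach the normal tail from below, so a bare limit does not license a finite-$q$ inequality. The needed slack is furnished by the factor $\tfrac12$ from $a_i\le\tfrac12$: because $\tfrac12\Pr[\mathrm{Bin}(q,1/2)\notin B]$ converges to only half of the stated bound, the inequality holds with room to spare for large $q$, so the floor in the definition of $B$ and the continuity-correction discrepancies are harmless. If one wanted an explicit $q_0$, I would replace the CLT by a Berry--Esseen estimate, but that is unnecessary under the ``for large enough $q$'' hypothesis.
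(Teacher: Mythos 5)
Your argument follows the same route as the paper's own proof: expand $\E_{h_1,h_2}d(h_1,h_2)$ via Corollary~\ref{corollary:standardXformula}, obtain the lower bound by discarding the nonnegative terms outside~$B$, and control the discarded mass by the two-sided binomial tail $\sum_{r\notin B}\binom{q}{r}2^{-q}$, whose limit $1-C(2c)+C(-2c)$ is computed by standardizing and invoking the CLT (this is exactly the paper's Lemma~\ref{lemma:tailBounds}). The one substantive difference is the bound you place on the coefficients: the paper uses only $\E_{h_1,h_2}d_{\cC_r}(h_1,h_2)\le 1$ and then concludes directly from the limit statement, which---as you correctly flag---does not by itself yield a finite-$q$ inequality, since the binomial tail may approach its normal limit from above. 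Your observation that $\E_{h_1,h_2}d_{\cC_r}(h_1,h_2)=\frac{1}{|\cC_r|}\sum_{\boldc\in\cC_r}\frac{1-H(\boldc)^2}{2}\le\frac{1}{2}$ (by independence of $h_1,h_2$, exactly as in the paper's Section~\ref{section:basic}) supplies a factor-$2$ margin, so that $\frac{1}{2}\Pr[\mathrm{Bin}(q,1/2)\notin B]$ converges to strictly less than the stated error term and the ``for large enough $q$'' claim follows rigorously. In short, your proof is correct and actually patches a small gap in the paper's version, which would otherwise need either your $\frac{1}{2}$ bound or a quantitative (e.g.\ Berry--Esseen) tail estimate to justify replacing the limit by an eventual inequality.
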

A simple numeric approximation of~$C(x)$ shows that~$1-C(2c)+C(-2c)$ approaches zero very fast as~$c$ grows. Hence, we have~$\E_{h_1,h_2}d(h_1,h_2)\approx \E_{h_1,h_2}\sum_{r\in B}\binom{q}{r} 2^{-q}d_{\cC_i}(h_1,h_2)$. In the latter expression the contribution of every sampled $h_1,h_2\in\cH$ can be computed \textit{exactly} in $O(nk\sqrt{n-|I|})$ time. 
\begin{proof} (of Lemma~\ref{lemma:tailBoundsEst})
	The lower bound is trivial from Corollary~\ref{corollary:standardXformula}. To prove the upper bound, notice that
	\begin{align*}
	\E_{h_1,h_2}d(h_1,h_2) &\le  \sum_{r\in B}\frac{\binom{q}{r}}{2^q}\E_{h_1,h_2}d_{\cC_r}(h_1,h_2) +	\sum_{r\notin B}\frac{\binom{q}{r}}{2^q}\E_{h_1,h_2}d_{\cC_r}(h_1,h_2)\\
	&\le \sum_{r\in B}\frac{\binom{q}{r}}{2^q}\E_{h_1,h_2}d_{\cC_r}(h_1,h_2)+\sum_{r\notin B}\frac{\binom{q}{r}}{2^q}.
	\end{align*}
	According to Lemma~\ref{lemma:tailBounds} which is proved shortly, we have that $\lim_{q\to\infty}\sum_{r\notin B}\frac{\binom{q}{r}}{2^q}=1-C(2c)+C(-2c)$, which concludes the claim.
\end{proof}

We are left with an exercise in probability theory, whose proof requires the central limit theorem~\cite{CLT}, and a full proof is given for completeness. In what follows, let~$\Sigma_q\triangleq \sum_{i=1}^qX_i$ for i.i.d $X_i=Bern(1/2)$. Further, as in Lemma~\ref{lemma:tailBoundsEst}, let~$B=\{\frac{q}{2}-\floor{c\sqrt{q}},\ldots,\frac{q}{2}+\floor{c\sqrt{q}}  \}$ for some constant~$c$.

\begin{lemma}\label{lemma:tailBounds}
	$\lim_{q\to\infty}\sum_{r\notin B}\frac{\binom{q}{r}}{2^q}=1-C(2c)+C(-2c)$, where~$C$ is the CDF of~$\cN(0,1)$.
\end{lemma}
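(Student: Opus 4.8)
The plan is to recognize the sum as the two-sided tail probability of a symmetric binomial and then invoke the De Moivre--Laplace central limit theorem. Concretely, since $\binom{q}{r}2^{-q}=\Pr(\Sigma_q=r)$ for $\Sigma_q\sim\mathrm{Bin}(q,1/2)$, the quantity $\sum_{r\notin B}\binom{q}{r}2^{-q}$ is exactly $\Pr(\Sigma_q\notin B)$. By the definition of $B$, an integer $r$ lies outside $B$ if and only if $|r-\tfrac{q}{2}|>\floor{c\sqrt{q}}$, so the sum equals $\Pr(|\Sigma_q-\tfrac{q}{2}|>\floor{c\sqrt{q}})$.

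First I would standardize. The variable $\Sigma_q$ has mean $q/2$ and variance $q/4$, so setting $Z_q\triangleq\frac{\Sigma_q-q/2}{\sqrt{q}/2}$ yields a mean-zero, unit-variance random variable, and the event above becomes $\{|Z_q|>t_q\}$ with $t_q\triangleq\frac{2\floor{c\sqrt{q}}}{\sqrt{q}}$. Because $c\sqrt{q}-1<\floor{c\sqrt{q}}\le c\sqrt{q}$, we have $2c-\frac{2}{\sqrt{q}}<t_q\le 2c$, and hence $t_q\to 2c$ as $q\to\infty$. By the central limit theorem~\cite{CLT}, $Z_q$ converges in distribution to $Z\sim\cN(0,1)$, whose CDF $C$ is continuous on all of $\bR$.

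The remaining step is to transfer the convergence in distribution to an event whose boundary $t_q$ is itself moving, which is the only genuine subtlety. I would handle it by sandwiching: fix $\delta>0$; then for all sufficiently large $q$ we have $2c-\delta<t_q<2c+\delta$, so $\Pr(|Z_q|>2c+\delta)\le\Pr(|Z_q|>t_q)\le\Pr(|Z_q|>2c-\delta)$. Since $C$ is everywhere continuous, every threshold here is a continuity point, so letting $q\to\infty$ and using weak convergence gives $\Pr(|Z|>2c+\delta)\le\liminf_q\Pr(|Z_q|>t_q)\le\limsup_q\Pr(|Z_q|>t_q)\le\Pr(|Z|>2c-\delta)$. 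Letting $\delta\to 0$ and invoking continuity of $C$ once more collapses both outer bounds to $\Pr(|Z|>2c)$.

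It then only remains to compute this limit explicitly: $\Pr(|Z|>2c)=1-\Pr(-2c\le Z\le 2c)=1-(C(2c)-C(-2c))=1-C(2c)+C(-2c)$, as claimed. The main obstacle is precisely the moving threshold $t_q\to 2c$ coupled with the discreteness of $\Sigma_q$; both are neutralized by the continuity of the normal CDF, which renders the endpoint and strict-versus-weak-inequality issues irrelevant in the limit.
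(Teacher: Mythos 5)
Your proof is correct and follows essentially the same route as the paper's: identify the sum as a binomial tail probability, standardize $\Sigma_q$ to mean zero and unit variance, and apply the central limit theorem, using continuity of the normal CDF to pass to the limit. The only difference is that you treat the floor in the definition of $B$ via an explicit sandwich on the moving threshold $t_q\to 2c$, whereas the paper simply replaces $\floor{c\sqrt{q}}$ by $c\sqrt{q}$ without comment; your version is the more careful of the two.
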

\begin{proof}
	Clearly, we have that the probability of~$\Sigma_q$ to have a value in~$B$ is~$\sum_{r\in B}\frac{\binom{q}{r}}{2^q}$. Furthermore, this probability can be written as
	\begin{align*}
	\Pr(\Sigma_q\in B)&=
	\Pr\left( \frac{q}{2}-\floor{c\sqrt{q}} \le \Sigma_q\le \frac{q}{2}+\floor{c\sqrt{q}} \right)\\
	&=\Pr\left( \frac{q}{2}-c\sqrt{q} \le \Sigma_q\le \frac{q}{2}+c\sqrt{q} \right)\\
	&= \Pr\left( -2c\le\frac{2\Sigma_q-q}{\sqrt{q}}\le 2c \right).
	\end{align*}
	Since~$\E[Bern(1/2)]=\frac{1}{2}$ and~$\sigma^2(Bern(1/2))=\frac{1}{4}$, it follows that
	\begin{align*}
	\frac{\sum_{i=1}^q X_i-\sum_{i=1}^q \E[X_i]}{\sqrt{\sum_{i=1}^q \sigma^2(X_i)}}=\frac{\Sigma_q-\frac{q}{2}}{\sqrt{\frac{q}{4}}}=
	\frac{2\Sigma_q-q}{\sqrt{q}}.
	\end{align*}
	Therefore, a straightforward application of the central limit theorem implies that
	\begin{align*}
	\lim_{q\to\infty}\sum_{r\in B}\frac{\binom{q}{r}}{2^q}&=\lim_{q\to\infty}\Pr\left( -2c\le\frac{2\Sigma_q-q}{\sqrt{q}}\le 2c \right)\\
	&=\Pr(-2c\le \cN(0,1) \le 2c).
	\end{align*}
	Hence, it follows that~$\lim_{q\to\infty}\sum_{r\in B}\frac{\binom{q}{r}}{2^q}=C(2c)-C(-2c)$, which implies the claim.
\end{proof}
\fi
\end{document}